\newtheorem{proposition}{Proposition}
\newcommand{\bu}{\mathbf{u}}
\newcommand{\bv}{\widehat{\bf u}}
\newcommand{\bvr}{\widehat{\bf \mathsf u}}
\newcommand{\bw}{\mathbf{v}}
\icmltitlerunning{Automatic Classifiers as Scientific Instruments}
\begin{document}

\twocolumn[
\icmltitle{Automatic Classifiers as Scientific Instruments:\\One Step Further Away from Ground-Truth}



\icmlsetsymbol{equal}{*}

\begin{icmlauthorlist}
\icmlauthor{Jacob Whitehill}{wpi}
\icmlauthor{Anand Ramakrishnan}{wpi}
\end{icmlauthorlist}

\icmlaffiliation{wpi}{Department of Computer Science, Worcester Polytechnic Institute (WPI), MA, USA}

\icmlcorrespondingauthor{Jacob Whitehill}{jrwhitehill@wpi.edu}

\icmlkeywords{Pearson correlation}

\vskip 0.3in
]



\printAffiliationsAndNotice{}  

\begin{abstract}
Automatic machine learning-based detectors of various psychological and social 
phenomena (e.g., emotion, stress, engagement)
have great potential to advance basic science.
However, when a detector $d$ is trained to approximate an existing measurement tool
(e.g., a questionnaire, observation protocol), then care must
be taken when interpreting measurements collected using $d$ since they are one step further removed from the underlying construct.
We examine how the accuracy of $d$, as quantified by the correlation $q$ of $d$'s outputs with the ground-truth 
construct $U$, impacts the estimated correlation between $U$ (e.g., stress) and some other phenomenon $V$ (e.g., academic performance).
In particular: (1) We show that if the true correlation between $U$ and $V$ is $r$, then the expected
sample correlation, over all vectors $\mathcal{T}^n$ whose correlation with $U$ is $q$, is $qr$.
(2) We derive a formula for
the probability that the sample correlation (over $n$ subjects) using  $d$ is positive given that the true correlation  is negative
(and vice-versa);  this probability can be substantial (around $20-30\%$)
for values of $n$ and $q$  that have been used in recent affective computing studies.
(3) With the goal to reduce the variance
of correlations estimated by an automatic detector, we show that training multiple neural networks
$d^{(1)},\ldots,d^{(m)}$ using different training architectures and hyperparameters   for the same detection task
provides only limited ``coverage'' of $\mathcal{T}^n$.
\end{abstract}

\section{Introduction}
Automatic classifiers have the potential to advance basic research
in psychology, education, medicine, and many other fields by serving as \emph{scientific instruments}
that can measure behavioral, medical, social, and other phenomena with higher temporal resolution,
lower cost, and greater consistency than is possible with traditional methods such
as human-coded questionnaires or observation protocols.
The affective computing \cite{picard2010affective} community is starting to see some first fruits of this potential:
Perugia, et al.~\cite{perugia2017electrodermal} used the Empatica E4 wristband sensor
to explore the relationship  between participants' ($n=14$) electrodermal activity (EDA) and
their emotional states when playing cognitive games.
Parra, et al.~\cite{parra2017multimodal} used the Emotient facial expression recognition
software to identify a positive correlation ($r=0.32$, $n=59$ participants) 
between emotions and adult attachment \cite{collins1990adult}.
Chen, et al.~\cite{chen2014initial} used Emotient
in a study of how facial emotion is associated with job interview performance among $n=4$ participants.

In most empirical studies designed to measure the relationship between two phenomena
$U$ and $V$ (e.g., engagement \cite{monkaresi2017automated},
grit \cite{duckworth2007grit}, stress, attachment \cite{collins1990adult},
academic performance, etc.), the investigator chooses a validated instrument for each
phenomenon and records measurements of each variable for $n$ participants. She/he then
computes a statistic, such as the Pearson product-moment coefficient, that captures
the magnitude and sign (as well as statistical significance) of the relationship between
the two variables. Machine learning
offers the potential to create a new array of scientific instruments with important advantages
compared to standard measurement tools. However, they also bring a potential pitfall that --
while not fundamentally new, i.e., there is always a separation between a construct and its measurement --
is exacerbated compared to using standard measurements: If one creates a new scientific
instrument by training an automatic detector $d$ to mimic a standard instrument as closely as possible, 
then  $d$ is \emph{one degree of separation further removed from the underlying phenomenon $U$} --
i.e., it is an estimator of another estimator.

{\bf Motivating example}:
Suppose a behavioral scientist wishes to examine the relationship between stress (construct
$U$) and academic performance (construct $V$).
Using a traditional approach, she/he could conduct an experiment in which each participant completes some
cognitively demanding task and then takes a test. To measure stress, the scientist
could also ask each participant to complete an established survey, e.g., the Dundee State Stress Questionnaire
\cite{matthews1999validation}. The relationship between $U$ and $V$ could then be estimated
as the correlation $r=\rho(\bu,\bw)$ between the vector of test scores $\bw$ and the corresponding vector
of stress measurements $\bu$ over all $n$ participants.

However, suppose that the researcher also has access to an \emph{automatic stress detector} $d$ that
uses the participant's face pixels
to measure his/her stress level. Suppose that the accuracy of $d$ was previously  validated w.r.t.
a standard stress questionnaire (like \cite{matthews1999validation}), and the validation showed that
the outputs of $d$, which we denote with $\bv$,
have an expected correlation of $q$ with the standard questionnaire.
What could go wrong, in terms of spurious deductions, when the correlation between $U$ and $V$
is estimated as $\rho(\bv,\bw)$ instead of $\rho(\bu,\bw)$?

Figure \ref{fig:example} shows one hypothetical example of what can go wrong:
vectors $\bu,\bw \in\mathbb{R}^n$ contain  measurements from $n$ participants
of constructs $U$ and $V$, respectively, where $\bu$ is obtained through a standard instrument.
The Pearson product-moment correlation between two vectors can be written:
\[
\rho(\bu,\bw) = \frac{\left(\bu - \mu_{\bu} \right)^\top \left(\bw - \mu_{\bw}\right)}{\|\bu - \mu_{\bu}\|_2 \|\bw - \mu_{\bw}\|_2}
\]
where $\mu_{\bu}$ (or $\mu_{\bw}$) is a vector  whose elements equal the mean value of 
$\bu$ (or $\bw$). Defined in this way, the value of $\rho$ is random if either of its two
arguments is random.
If $\bu$ and $\bw$ are both normalized to have 0-mean and unit-length, then their correlation depends
only on the \emph{angle} between them:
\[
\rho(\bu,\bw) = \bu^\top \bw = \cos \angle(\bu,\bw)
\]
In  the figure, this correlation
is $\cos(105^\circ) \approx -.259$, i.e., the data
suggest that $U$ is \emph{negatively} correlated with $V$.
Suppose instead that  the researcher had used an automatic detector $d$ to obtain $\bv$,
where prior analysis had  established that the expected correlation of $d$'s outputs and
the standard instrument was $q= \cos(30^\circ)\approx 0.866$. If the researcher uses the correlation
$\rho(\bv,\bw)$ to estimate the relationship between $U$ and $V$, then she/he would
obtain $\cos(135^\circ) = -0.707$ -- a much larger magnitude, but at least the same sign as,
the $-0.259$ correlation obtained using a standard instrument for $U$. But the bigger problem is the following:
$\bv$ is not the only
vector whose correlation with the ``ground-truth'' measurements $\bu$  is $q$. Vector $\bv'$
also has the same correlation. If the researcher obtained measurements $\bv'$, then she/he
would deduce a \emph{positive} correlation of $\rho(\bv',\bw)=\cos(75^\circ) \approx 0.259$ -- 
this is opposite to the correlation obtained with a standard instrument.




\begin{figure}
\begin{center}
\includegraphics[trim={0 0 0 3cm},clip,width=1.475in]{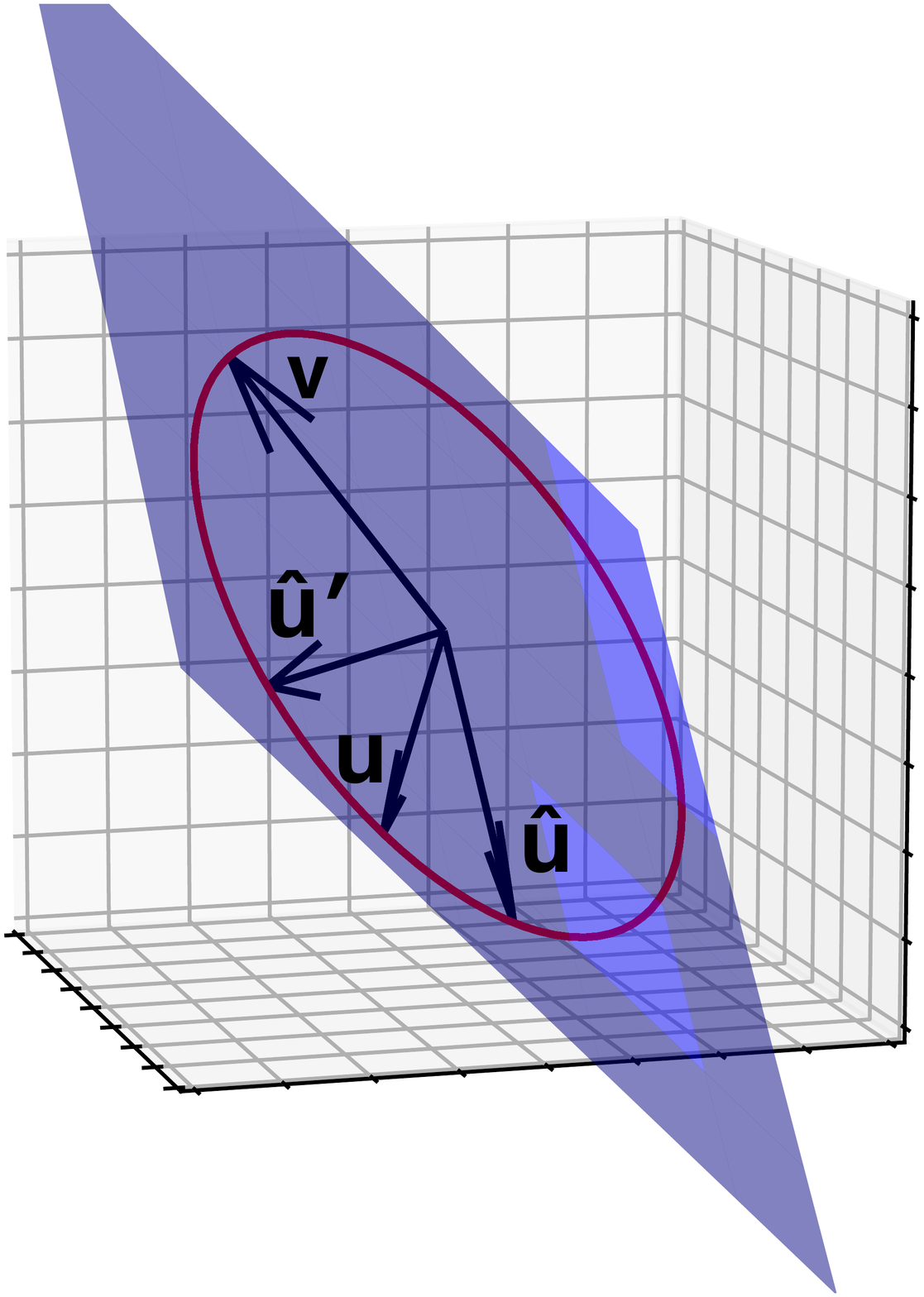}
\includegraphics[trim={1cm 0 0 0},clip,width=1.725in]{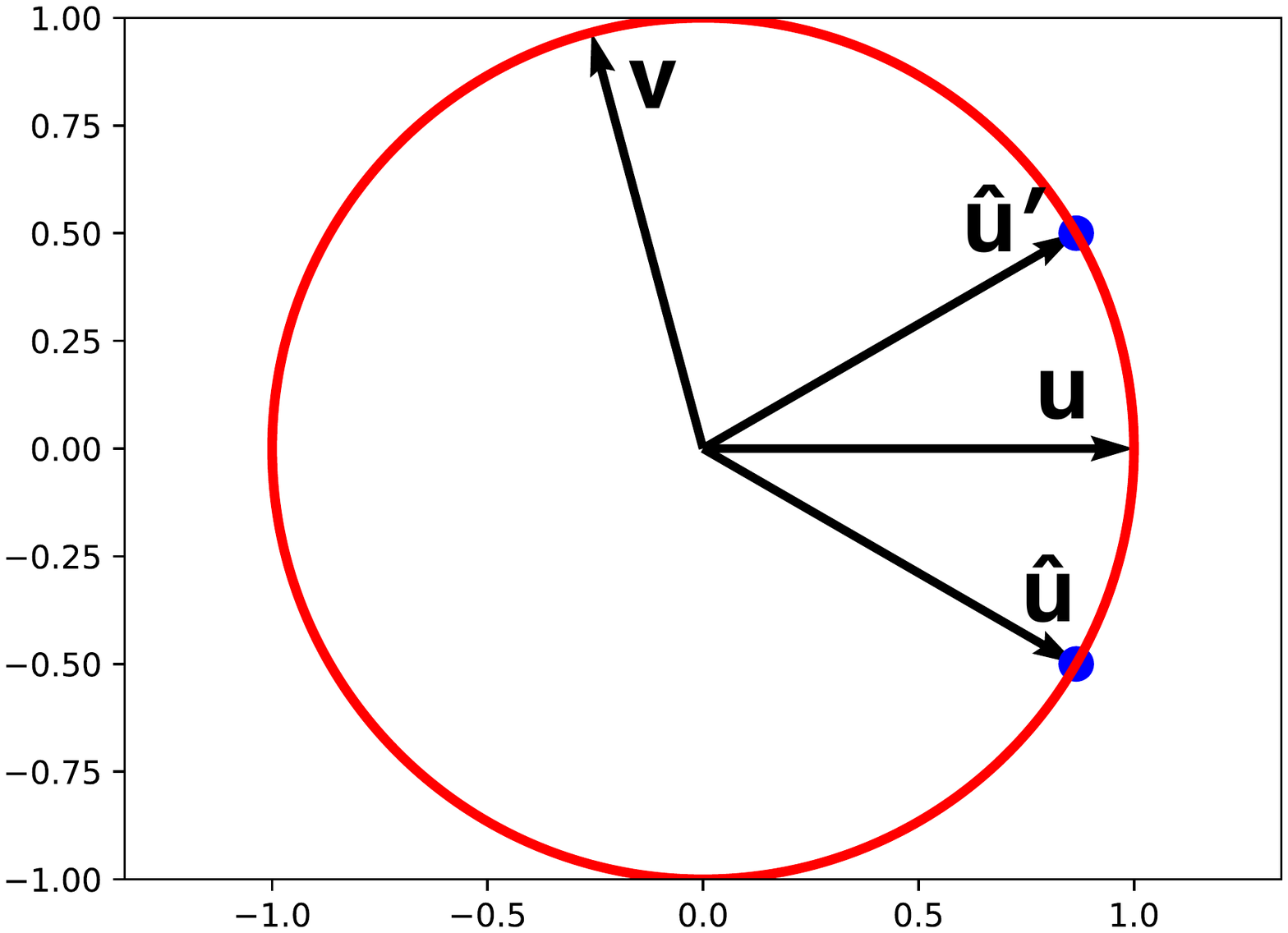}
\end{center}
\caption{$\bu$ and $\bw$ are measurements of some behavioral, social, educational, (or other) phenomenoma
for $n$ participants in an experiment.
$\bv$ (or $\bv'$) are proxy measurements of $\bu$ that were obtained from an automatic detector.
Both $\bv$ and $\bv'$ have the same correlation ($r\approx 0.867$) with $\bu$. However,
depending on \emph{which} vector is obtained from the detector, the estimated correlation 
can be very different: $\rho(\bv,\bw)<0$, but $\rho(\bv',\bw)>0$.
}
\label{fig:example}
\end{figure}

In this paper  we explore how the accuracy $q$ of a scientific instrument $d$, as measured by the Pearson correlation with
the ground-truth construct $U$, impacts the estimated correlation between constructs $U$ and $V$.
Although there are various ways of quantifying the relationship between two vectors of measurements (e.g., RMSE, MAE),
the Pearson correlation is one of the most commonly used metrics.
{\bf Contributions}: (1) We prove that  $\textrm{E}[\rho(\bvr, \bw)]=qr$, where $r$ is the true correlation between $U$ and $V$ and random vector $\bvr$ is 
sampled uniformly over the $(n-3)$-sphere $\mathcal{T}^n$ of
$0$-mean unit-vectors whose correlation with $\bu$ is $q$.
Next, as one of the most fundamental aspects of the relationship between two variables is whether they are positively
or negatively correlated, 
(2) we derive a function $h$ to compute the probability that the sample correlation
(over $n$ subjects) using $d$ is positive,
given that the true correlation between $U$ and $V$ is negative (and vice-versa). 
We also prove that $h$ is monotonically decreasing in $n$ and in $q$, i.e., the danger of a false
correlation is mitigated by training a more accurate detector or collecting data from more participants.
Finally, 
(4) we explore to what extent the sphere $\mathcal{T}^n$ can be ``covered'' by 
measurement vectors $\bvr^{(1)},\ldots,\bvr^{(m)}$ obtained by training $m$ different
neural networks  on the same dataset for the same detection task but using different
configurations (e.g., architectures, hyperparameters, etc.). We also devise a novel technique to visualize this coverage of $\mathcal{T}^n$.

\section{Related Work}
The issue of how product-moment (Pearson)
correlations among a subset of variables constrain the possible correlations among the remaining
variables has interested statisticians since the 1960s. While there has been significant prior work
on the trivariate case in particular, we
are not aware of any work that proves exactly the same results as what we present here.
\cite{priest1968range} showed a lower bound on the mean intercorrelation between variables.
\cite{glass1970geometric}, and also  \cite{leung1975note},
proved that, in trivariate distributions, there are range restrictions on the possible correlations
between $U$ and $V$ when the correlations between $V$ and $W$ and between $U$ and $W$ are already known.
\cite{olkin1981range} extended this result to multivariate distributions beyond 3 variables.

More recent, and most similar to our work, is a study by  \cite{carlson2012understanding}
from the operations research community in 2012. They examined the methodological risk
of using proxy measures to estimate the correlations between different constructs. Using analytical
results by  \cite{leung1975note}, they show how the observed correlations
between $\bu$ and $\bv$ can vary substantially as a function of the reliability
of a proxy measure $\widehat{\bu}$ of $\bu$. In contrast to our work, theirs is based on simulations and contains
no formal proofs.

Finally,  which vector of measurements $\bvr \in \mathcal{T}^n$ is obtained could potentially be related to subgroup membership -- e.g., gender, ethnicity, age. From the perspective of \emph{fairness} in machine learning \cite{barocas2017fairness,kearns2018empirical}, it could therefore be  important to understand how much variation there is among different subgroup populations over the different $\bvr$ that are obtained from the scientific instrument.

\section{Modelling assumptions}
{\bf Notation}: We typeset random variables in \textsf{Futura} font;
all other variables are fixed (non-random).
$\mathbf{u}$ and $\mathbf{v}$ are fixed $n$-vectors of ground-truth measurements of
two phenomena $U$ and $V$, respectively. 
$\hat{\mathsf u}$ is a random $n$-vector (sampled from $\mathcal{T}^n$),
obtained from scientific instrument $d$, containing noisy measurements of $\mathbf{u}$.

For our theoretical results (Propositions  \ref{prop:qr} through \ref{prop:monotone2}), we assume the correlation between
$\hat{\mathsf u}$ and $\mathbf{u}$ is exactly $q$; then, $\hat{\mathsf u}$ is sampled uniformly from the sphere $\mathcal{T}^n$ of all such vectors.
However, for our empirical results regarding the probability of a ``false correlation'' (see Section \ref{sec:false_corr}),
we relax this assumption: It is unlikely that instrument $d$ (that we assume was previously estimated to have correlation $q$ w.r.t.~ground-truth)
always produces a vector $\hat{\mathsf u}$ whose correlation with $\mathbf{u}$ is \emph{exactly} $q$. Instead, the actual correlation $\hat{\mathsf q}$ of
$\hat{\mathsf u}$ and $\mathbf{u}$ comes from a sampling distribution of Pearson correlations (with ``true'' correlation $q$ and the number of subjects
$n$ as parameters; see Section \ref{sec:sampling_dist}). 
Then, given a fixed $\hat{\mathsf q}$, we sample $\hat{\mathsf u}$ from the corresponding $\mathcal{T}^n$
(which depends on $\hat{\mathsf q}$). When we compute the probability of a ``false correlation'' in our case studies 
(Section \ref{sec:case_studies}), we marginalize over $\hat{\mathsf q}$.


\section{Expected Correlation of $\bvr \in \mathcal{T}^n$ with $\bw$}
\label{sec:qr}
When we use an automatic classifier $d$ to obtain
a vector of measurements, then we obtain a vector $\bv$ whose correlation
with the underlying construct $U$ (e.g., stress) is $q$. However, as illustrated
in the example above, there can be multiple such vectors, and which one is obtained
can make a big difference on the estimated correlation.
As we show below, the set of $0$-mean unit-length vectors with a fixed correlation 
to another unit-vector is an $(n-3)$-sphere embedded in $\mathbb{R}^n$. If we sampled uniformly at random
from this sphere, then what would be 
the expected sample correlation 
between $\bvr$ and some other vector $\bw$ (e.g., academic performance)?

To simplify our analyses below, we assume $\bu,\bv,\bw$ all have $0$-mean and unit-length 
since Pearson correlation is invariant to these quantities.
(Regarding the uniformity assumption: see Future Work in Section \ref{sec:conclusions}.)


\begin{proposition}
\label{prop:qr}
Let $\bu,\bw$ be $n$-dimensional, $0$-mean, unit-length vectors with 
a Pearson product-moment correlation $\rho(\bu,\bw)=r$. Then (1) the 
set $\mathcal{T}^n$ of $0$-mean, unit-length vectors whose
correlation with $\bu$ is $q$ is an $(n-3)$-sphere embedded in $\mathbb{R}^n$.
Moreover, (2) if $\bvr$ 
is a random vector  sampled uniformly from $\mathcal{T}^n$, then
the expected sample correlation $\textrm{E}[\rho(\bvr,\bw)]=qr$.
\end{proposition}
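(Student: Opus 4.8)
The plan is to exploit the fact that for $0$-mean, unit-length vectors the Pearson correlation reduces to a plain inner product, $\rho(\bx,\bw)=\bx^\top\bw$, so that every constraint defining $\mathcal{T}^n$ becomes linear or quadratic and the geometry becomes transparent. First I would record the three conditions a vector $\bx$ must satisfy to lie in $\mathcal{T}^n$: the $0$-mean condition $\mathbf{1}^\top\bx=0$, the unit-length condition $\|\bx\|_2=1$, and the fixed-correlation condition $\bx^\top\bu=q$. Because $\bu$ is itself $0$-mean, it is orthogonal to $\mathbf{1}$, so both $\bu$ and every candidate $\bx$ live in the $(n-1)$-dimensional subspace $W=\{\bx:\mathbf{1}^\top\bx=0\}$; this observation is what makes the dimension count come out correctly.

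For part (1), I would work entirely inside $W\cong\mathbb{R}^{n-1}$ and decompose each $\bx\in W$ as $\bx=(\bx^\top\bu)\,\bu+\bx_\perp$, where $\bx_\perp$ is orthogonal to $\bu$ within $W$. The correlation constraint fixes the coefficient of $\bu$ at $q$, and then the unit-length constraint forces $\|\bx_\perp\|_2=\sqrt{1-q^2}$. Since $\bx_\perp$ ranges over the subspace orthogonal to both $\mathbf{1}$ and $\bu$, which has dimension $n-2$, the set of admissible $\bx_\perp$ is a sphere of radius $\sqrt{1-q^2}$ in $\mathbb{R}^{n-2}$ -- i.e.\ an $(n-3)$-sphere -- and the affine map $\bx_\perp\mapsto q\bu+\bx_\perp$ carries it isometrically onto $\mathcal{T}^n$. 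This proves $\mathcal{T}^n$ is an $(n-3)$-sphere and simultaneously furnishes a convenient parametrization.

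For part (2), I would write $\bvr=q\,\bu+\sqrt{1-q^2}\,\mathsf z$, where $\mathsf z=\bx_\perp/\|\bx_\perp\|_2$ is a unit vector; because the parametrization above is an isometry up to a fixed scale factor, sampling $\bvr$ uniformly on $\mathcal{T}^n$ is equivalent to sampling $\mathsf z$ uniformly on the unit $(n-3)$-sphere of the subspace orthogonal to $\mathbf{1}$ and $\bu$. Then, using $\rho(\bvr,\bw)=\bvr^\top\bw$ and linearity of expectation,
\[
\textrm{E}[\rho(\bvr,\bw)] = q\,\bu^\top\bw + \sqrt{1-q^2}\;\textrm{E}[\mathsf z]^\top\bw .
\]
Here $\bu^\top\bw=r$. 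The uniform distribution on a sphere centered at the origin is invariant under every orthogonal transformation of that subspace, and the only vector fixed by all such maps is $\mathbf{0}$, so $\textrm{E}[\mathsf z]=\mathbf{0}$ and the second term vanishes, leaving $\textrm{E}[\rho(\bvr,\bw)]=qr$.

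The routine part is the algebra of the decomposition; the step demanding the most care is verifying that uniform sampling on $\mathcal{T}^n$ really does push forward to uniform sampling of $\mathsf z$ on the $(n-3)$-sphere (so that the symmetry argument applies) and that the $0$-mean constraint is handled consistently throughout -- in particular, confirming that $\bu\perp\mathbf{1}$ is exactly what keeps the effective dimension at $n-3$ rather than $n-2$. Once the symmetry-based vanishing of $\textrm{E}[\mathsf z]$ is justified, the result follows immediately.
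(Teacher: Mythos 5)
Your proposal is correct and takes essentially the same approach as the paper: your decomposition $\bx = q\bu + \bx_\perp$ within the $0$-mean hyperplane is exactly what the paper's orthonormal change of basis $\mathbf{B}$ implements in coordinates (with $\mathbf{B}\bu=(0,1,0,\ldots,0)$ and the sphere of radius $\sqrt{1-q^2}$ appearing in the last $n-2$ coordinates), and both proofs kill the cross term by the symmetry of the uniform distribution on that sphere. The only cosmetic difference is that the paper also rotates $\bw$ into the span of the second and third basis vectors so that a single symmetric coordinate $\hat{\mathsf u}_3$ carries the randomness, whereas you bypass this alignment by concluding $\textrm{E}[\mathsf z]=\mathbf{0}$ directly from orthogonal invariance.
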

\begin{proof}
The set of all $0$-mean $n$-vectors constitutes a hyperplane
\[
\mathcal{H} \doteq \{ \mathbf{x} \in \mathbb{R}^n: \mathbf{1}^\top \mathbf{x}=0 \}
\]
that passes through the origin with normal vector $\mathbf{1} \doteq (1,\ldots, 1)$.
The set of all unit-length vectors constitutes an $(n-1)$-sphere
\[
\mathcal{S}^{n-1} \doteq \{ \mathbf{x} \in \mathbb{R}^n: \|\mathbf{x}\|_2 = 1 \}
\]
embedded in $\mathbb{R}^n$.
Therefore, $\bu,\bw,\bv \in \mathcal{H} \cap \mathcal{S}^{n-1}$. 
Figure \ref{fig:example} (left) shows $\mathcal{H}$ in blue, as well as the
intersection of $\mathcal{H}$ with $\mathcal{S}^{n-1}$ as a red circle.
Since all three vectors have $0$-mean and unit-length,
then the correlations between these vectors depend only on the angles between them. Hence,
w.l.o.g.~we can rotate the axes 
so that $\mathcal{H}$ consists of all vectors whose first coordinate is $0$,
and all correlations will be preserved.
After doing so, 
the only remaining constraint is that the projected vectors  have unit-length.

More precisely, we can compute an orthonormal basis $\mathbf{B}$ such that
the first coordinate of vector $\mathbf{B} \mathbf{x}$ is 0 for every $\mathbf{x} \in \mathcal{H}$, and
such that
\begin{eqnarray}
\mathbf{B} \bu &=& (0,1,\overbrace{0,0,\ldots,0}^{n-2}) \label{eqn:transformed_u}\\
\mathbf{B} \bw &=& (0,a,b,0,\ldots,0)
\end{eqnarray}
Geometrically, this means that we can define $\mathbf{B}$ so that the projected 
$\bu,\bw$ lie in the plane spanned by the second and third vectors in basis $\mathbf{B}$ (see
Figure \ref{fig:example} (right)) -- this makes
the rest of the derivation much simpler. $a$ represents the component of $\bw$ parallel to $\bu$,
and $b$ is the component orthogonal to $\bu$.
Since the correlation between $\bu$ and $\bw$ is $r$, and since $\mathbf{B}$ is orthonormal, then
\begin{eqnarray*}
\lefteqn{\left(\mathbf{B} \bu\right)^\top \left(\mathbf{B} \bw\right) = \bu^\top \bw} &&\\
         &=& r \\
         &=& 0\times0 + 1\times a + 0\times b + 0 + \ldots + 0 \\
	 &=& a
\end{eqnarray*}
and hence $a=r$.
Since $\|\bw\|_2=\|\mathbf{B} \bw \|_2 = 1$, then  $b=\sqrt{1-r^2}$.

Now consider any vector $\bv$ whose correlation with $\bu$ is $q$.
Let us define $(\hat{u}_1, \ldots, \hat{u}_n) \doteq \mathbf{B} \bv$. By construction
of $\mathbf{B}$, we already know that $\hat{u}_1=0$. We also have
\begin{eqnarray*}
\lefteqn{\left(\mathbf{B} \bv\right)^\top \left(\mathbf{B} \bu\right) = \bv^\top \bu} &&\\
         &=& q \\
         &=& \hat{u}_1 \times 0 + \hat{u}_2 \times 1 + \hat{u}_3 \times 0 + \ldots + \hat{u}_n \times 0 \\
	 &=& \hat{u}_2
\end{eqnarray*}
and hence $\hat{u}_2 = q$. Since $\mathbf{B} \bv$ is a unit-vector, then
\[
\mathbf{B} \bv \in \left\{ (0, q, \hat{u}_3, \ldots, \hat{u}_n): \sum_{i=3}^n \hat{u}_i^2 = 1 - q^2 \right\}
\]
This set is the surface of an $(n-3)$-sphere, with radius
$\sqrt{1-q^2}$, embedded in $\mathbb{R}^n$. Since $\mathbf{B}$ simply rotates the axes, then $\mathcal{T}^n$ 
is likewise a $(n-3)$-sphere embedded in $\mathbb{R}^n$. This proves part 1.

For part 2: 
When sampling uniformly from $\mathcal{T}^n$,
the distribution of $\hat{\mathsf u}_3$ on the $(n-3)$-sphere is symmetrical about 0.
Then $\textrm{E}[\hat{\mathsf u}_3]=0$, and hence:
\begin{eqnarray*}
\textrm{E}[\rho(\bvr, \bw)] &=& \textrm{E}[\bvr^\top \bw] \\
             &=& \textrm{E}[(\mathbf{B} \bvr)^\top (\mathbf{B} \bw)] \\
             &=& \textrm{E}[0 + q\times r + \hat{\mathsf u}_3\times \sqrt{1-r^2}] \\
             &=& qr + \textrm{E}[\hat{\mathsf u}_3] \sqrt{1-r^2} \\
	     &=& qr
\end{eqnarray*}
\end{proof}

\noindent {\bf Example}: For the case $n=3$, consider the four vectors shown in Figure \ref{fig:example} (left)
whose values are approximately:
\[
\begin{array}{ll}\bu = (.816, -.408, -.408) & \bw = (-.211, -.577, .788)\\
                 \bv = (.707, 0, -.707) & \bv' = (.707, -.707, 0)  \end{array}
\]
By construction, $\rho(\bv,\bu)=\rho(\bv',\bu)=\cos(30^\circ)=q$,
and $\rho(\bu,\bw)=\cos(105^\circ)=r$.
Via a change of basis $\mathbf{B}$, the vectors can be rotated so that
\begin{eqnarray*}
\mathbf{B} \bu &=& (0, 1, 0)\\
\mathbf{B} \bw &=& (0, \cos(105^\circ), \sin(105^\circ))\\
\mathbf{B} \bv &=& (0, \cos(30^\circ), -\sin(30^\circ))\\
\mathbf{B} \bv' &=& (0, \cos(30^\circ), \sin(30^\circ))
\end{eqnarray*}
The rotated vectors are shown in Figure \ref{fig:example} (right).  The set $\mathcal{T}^3$
contains exactly two elements (since it is a $0$-sphere): $\bv$ and $\bv'$. If $\bvr$ is sampled uniformly at 
random from $\mathcal{T}^3$, then $\textrm{E}[\rho(\bvr, \bw)]=qr\approx -.224$. 
This result agrees with
\begin{eqnarray*}
\frac{1}{2} \left[ \rho(\bv,\bw) + \rho(\bv',\bw)\right] &=& \frac{1}{2} \left[ \cos(135^\circ)+ \cos(75^\circ)\right] \\ & \approx & -.224
\end{eqnarray*}

\section{Probability of false correlations}
\label{sec:false_corr}
One of the most fundamental distinctions is whether two phenomena are positively or
negatively correlated with each other (or neither). What is the probability that
$\rho(\bvr, \bw)\geq 0$ given that the correlation $r<0$ (\emph{false positive correlation}); or that
$\rho(\bvr, \bw)< 0$ given that the correlation $r\geq 0$ (\emph{false negative correlation})?
How do these probabilities change as $n$ increases or $q$ increases?
The proofs of the following propositions are given in the appendix.

\begin{proposition}
\label{prop:deviation}
Let $q \in (0,1]$ be the correlation between the detector's output $\bvr$ and ground-truth $\bu$; let $r$ be the 
correlation between $\bu$ and $\bw$; and let $\bvr$ be sampled uniformly from $\mathcal{T}^n$.  If $r <0$, then
the probability of a \emph{false positive} correlation (in the sense defined above) 
is given by the function 
\[ h(n,q,r) =
\left\{ \begin{array}{ll}\frac{1}{2} \textrm{I}[c^2 \leq 1-q^2] & n=3 \\
 \frac{1}{2} \int_0^\infty f_1(t) F_{n-3}\left(\frac{1 - q^2 - c^2}{c^2} t\right) dt & n>3\end{array}\right.\]
where $\textrm{I}[\cdot]$ is the 0/1 indicator function, $f_k$ and $F_k$ are the PDF and CDF of
a $\chi^2$-random variable with $k$ degrees of freedom, and
$c = |qr|/\sqrt{1-r^2}$.  If $r > 0$, then
the probability of a \emph{false negative} correlation 
is also given by $h$.
\end{proposition}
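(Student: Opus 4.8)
The plan is to reduce the event to a one-dimensional threshold condition on a single coordinate of a uniform point on $\mathcal{T}^n$, and then evaluate that threshold probability using a Gaussian representation of the uniform sphere measure. Carrying over the change of basis $\mathbf{B}$ from Proposition~\ref{prop:qr}, I would first record that for any $\bvr \in \mathcal{T}^n$ the sample correlation decomposes as $\rho(\bvr,\bw) = qr + \hat{\mathsf u}_3\sqrt{1-r^2}$, where $\hat{\mathsf u}_3$ is the third coordinate of $\mathbf{B}\bvr$ and, under uniform sampling, is one coordinate of a uniformly random point on the $(n-3)$-sphere $\{\sum_{i=3}^n \hat u_i^2 = 1-q^2\}$. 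Hence the false-positive event $\{\rho(\bvr,\bw) \ge 0\}$ under $r<0$ is exactly $\{\hat{\mathsf u}_3 \ge c\}$ with $c = |qr|/\sqrt{1-r^2} > 0$, and the entire problem reduces to computing $\Pr(\hat{\mathsf u}_3 \ge c)$.

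Next I would represent the uniform measure on the sphere by normalized Gaussians: write $\hat{\mathsf u}_i = \sqrt{1-q^2}\,\mathsf Z_i/\sqrt{\sum_{j=3}^n \mathsf Z_j^2}$ with $\mathsf Z_3,\ldots,\mathsf Z_n$ i.i.d.\ standard normal. Since $c>0$, the event forces $\mathsf Z_3 > 0$; squaring and clearing denominators turns $\{\hat{\mathsf u}_3 \ge c\}$ into the intersection of $\{\mathsf Z_3 > 0\}$ with $\{(1-q^2-c^2)\,\mathsf Z_3^2 \ge c^2\sum_{j=4}^n \mathsf Z_j^2\}$. Because the sign of $\mathsf Z_3$ is independent of $(\mathsf Z_3^2,\mathsf Z_4,\ldots,\mathsf Z_n)$, the constraint $\mathsf Z_3 > 0$ contributes exactly the prefactor $\frac{1}{2}$. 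I would then identify $\mathsf Z_3^2 \sim \chi^2_1$ and $\sum_{j=4}^n \mathsf Z_j^2 \sim \chi^2_{n-3}$ as independent, condition on $t = \mathsf Z_3^2$, and (when $1-q^2-c^2 > 0$) rewrite the inequality as $\sum_{j=4}^n \mathsf Z_j^2 \le \frac{1-q^2-c^2}{c^2}\,t$; integrating $f_1(t)$ against $F_{n-3}$ of this bound yields the stated integral for $n>3$.

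Two boundary issues remain. First, the degenerate case $n=3$: here $\mathcal{T}^3$ is a $0$-sphere, so $\hat{\mathsf u}_3$ is $\pm\sqrt{1-q^2}$ with probability $\frac{1}{2}$ each, and $\Pr(\hat{\mathsf u}_3 \ge c)$ equals $\frac{1}{2}$ exactly when $\sqrt{1-q^2} \ge c$, i.e.\ $c^2 \le 1-q^2$, matching the indicator branch. Second, feasibility of the event: I would note the identity $1-q^2-c^2 = (1-q^2-r^2)/(1-r^2)$, so $c^2 \le 1-q^2$ holds iff $q^2+r^2 \le 1$; when it fails, the coefficient $1-q^2-c^2$ is negative and the event $\{\hat{\mathsf u}_3 \ge c\}$ is impossible, which the formula reproduces automatically since $F_{n-3}$ vanishes on negative arguments. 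Finally, for the false-negative case $r>0$, the event $\{\rho(\bvr,\bw) < 0\}$ becomes $\{\hat{\mathsf u}_3 < -c\}$, and by the symmetry of $\hat{\mathsf u}_3$ about $0$ this has the same probability, giving the same function $h$.

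The step I expect to be most delicate is getting the $\frac{1}{2}$ prefactor and the squaring argument exactly right: squaring is only an equivalence once the sign of $\mathsf Z_3$ is fixed, so I must justify peeling off $\{\mathsf Z_3 > 0\}$ via the independence of its sign from the squared magnitudes, rather than treating the two sides as freely squarable. The $\chi^2$/conditioning bookkeeping and the $n=3$ special case are then routine.
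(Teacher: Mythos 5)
Your proof follows essentially the same route as the paper's: the decomposition $\rho(\bvr,\bw) = qr + \hat{\mathsf u}_3\sqrt{1-r^2}$ carried over from Proposition~1, the Gaussian (Muller) representation of the uniform measure on the $(n-3)$-sphere, the sign-split on $\mathsf{z}_3$ yielding the $\frac{1}{2}$ prefactor, the reduction to independent $\chi^2_1$ and $\chi^2_{n-3}$ variables giving the stated integral, and the same degenerate treatment of $n=3$. If anything, your version is slightly more careful than the paper's on two minor points --- justifying the $\frac{1}{2}$ via independence of the sign of $\mathsf{z}_3$ from the squared magnitudes, and noting explicitly that when $1-q^2-c^2<0$ the event is impossible and the formula still returns $0$ because $F_{n-3}$ vanishes on negative arguments.
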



\begin{proposition}
\label{prop:monotone}
For every fixed $c>0$ and $q \in (0,1]$, function $h$ is monotonically decreasing in $n$.
\end{proposition}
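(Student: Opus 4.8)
The plan is to exploit the fact that $n$ enters $h$ only through the degrees of freedom of the $\chi^2$ CDF $F_{n-3}$, whereas the multiplicative constant appearing inside that CDF, namely $\lambda \doteq (1-q^2-c^2)/c^2$, depends only on the fixed quantities $c$ and $q$. Holding $c$ and $q$ fixed, the monotonicity of $h$ in $n$ therefore reduces to a monotonicity of $F_k(\lambda t)$ in the integer parameter $k=n-3$, uniformly over the nonnegative weight $f_1(t)$. This is exactly why $h$ is parameterized by $c$ rather than $r$: it makes $\lambda$ manifestly independent of $n$.

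First I would dispose of the degenerate cases. If $\lambda \le 0$ (equivalently $c^2 \ge 1-q^2$), then for every $t>0$ the argument $\lambda t$ is non-positive, so $F_{n-3}(\lambda t)=0$ and hence $h(n,q,r)=0$ for all $n$; this is (weakly) decreasing and is consistent with the $n=3$ value $\frac{1}{2}\,\textrm{I}[c^2\le 1-q^2]$. For the transition $n=3 \to n=4$ when $\lambda>0$, I would note that $h(3)=\frac{1}{2}$ while $h(4)=\frac{1}{2}\int_0^\infty f_1(t)F_1(\lambda t)\,dt < \frac{1}{2}$, since $F_1(\lambda t)<1$ for every finite argument. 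The substantive case is thus $\lambda>0$ with $n\ge 4$.

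The key step is the stochastic ordering of chi-squared variables. Since $\chi^2_{k+1} \stackrel{d}{=} \chi^2_{k} + Y$ with $Y\sim\chi^2_1$ independent and $Y\ge 0$ almost surely, we have for every $x\ge 0$
\[
F_{k+1}(x)=\Pr[\chi^2_k + Y \le x] \le \Pr[\chi^2_k \le x]=F_k(x),
\]
with strict inequality whenever $x>0$, because $\chi^2_k$ and $Y$ possess densities supported on $(0,\infty)$. Taking $k=n-3$ and $x=\lambda t$, the integrand satisfies $f_1(t)F_{n-2}(\lambda t)\le f_1(t)F_{n-3}(\lambda t)$ pointwise for each $t>0$, strictly on a set of positive Lebesgue measure. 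Integrating this pointwise inequality against the nonnegative weight $f_1$ preserves it and yields $h(n+1,q,r)\le h(n,q,r)$, strictly when $\lambda>0$, which is the claimed monotonicity.

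I expect the main obstacle to be bookkeeping rather than conceptual: confirming that $\lambda$ is genuinely independent of $n$, correctly matching the degrees-of-freedom shift $n\mapsto n-3$ to the unit increment in the stochastic-dominance step, and cleanly stitching together the $n=3$ indicator formula with the $n\ge 4$ integral formula so that the monotonicity also holds across that boundary. The analytic heart of the argument, the stochastic monotonicity of $\chi^2_k$ in $k$, is a clean and standard fact and requires no delicate estimation.
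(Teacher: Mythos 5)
Your proposal is correct and follows the same route as the paper's proof: write $h(n+1,q,r)-h(n,q,r)$ as a single integral of $f_1(t)\left[F_{n-2}(\lambda t)-F_{n-3}(\lambda t)\right]$ and conclude from the pointwise monotonicity of the $\chi^2$ CDF in its degrees of freedom, using nonnegativity of the weight $f_1$. The differences are in how the key fact is justified and in care at the boundaries. The paper cites Ghosh (1973) for the monotonicity of $F_k$ in $k$; you prove it in one line via the representation $\chi^2_{k+1}\stackrel{d}{=}\chi^2_k+Y$ with $Y\sim\chi^2_1$ independent and nonnegative, which makes the argument self-contained. You also handle two cases the paper's proof silently skips: (i) the regime $\lambda\le 0$ (i.e., $c^2\ge 1-q^2$, which can genuinely occur, e.g.\ for $q$ near $1$), where the paper's assertion that $F_{n-2}(\alpha t)-F_{n-3}(\alpha t)<0$ for all $t$ is false --- both terms vanish --- and only weak monotonicity holds; and (ii) the transition from the $n=3$ indicator formula to the $n\ge 4$ integral formula, which the paper's difference-of-integrals computation does not cover. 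Your only slip is cosmetic: at the boundary $\lambda=0$ you claim $h=0$ for all $n$, whereas the indicator gives $h(3,q,r)=\tfrac12$ there; the resulting sequence $\tfrac12,0,0,\ldots$ is still decreasing, so the proposition is unaffected.
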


\begin{proposition}
\label{prop:monotone2}
For every fixed $n> 3$, function $h$ is monotonically decreasing in $q \in (0,1]$.
\end{proposition}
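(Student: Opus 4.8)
The plan is to avoid differentiating $h$ in $q$ directly and instead reduce the claim to a pointwise monotonicity statement inside the integral. The first step is to simplify the factor that multiplies $t$ in the argument of $F_{n-3}$. Substituting $c^2 = q^2 r^2/(1-r^2)$ into $(1-q^2-c^2)/c^2$ and clearing the common denominator $1-r^2$ (so that $1-q^2-c^2 = (1-q^2-r^2)/(1-r^2)$), I expect the expression to collapse to the clean form
\[
g(q) \doteq \frac{1-q^2-c^2}{c^2} = \frac{1-q^2-r^2}{q^2 r^2},
\]
so that $h(n,q,r) = \frac{1}{2} \int_0^\infty f_1(t)\, F_{n-3}\!\left(g(q)\, t\right)\, dt$ for $n>3$.

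Second, I would show that $g$ is strictly decreasing in $q$ on $(0,1]$. Rewriting $g(q) = \frac{1}{q^2}\left(\frac{1}{r^2}-1\right) - \frac{1}{r^2}$ and using $0<|r|<1$ (so that $\frac{1}{r^2}-1>0$), the factor $1/q^2$ is strictly decreasing in $q$, hence so is $g$; equivalently $g'(q) = -\frac{2}{q^3}\left(\frac{1}{r^2}-1\right) < 0$. This is the only place the hypotheses on $r$ enter the argument.

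Third, I would exploit that $F_{n-3}$, being the CDF of a $\chi^2$ variable with $n-3\ge 1$ degrees of freedom, is non-decreasing on $\mathbb{R}$ (and strictly increasing on $(0,\infty)$), while $f_1(t)\ge 0$ and the integral runs over $t\ge 0$. For $q_1<q_2$ we have $g(q_1)>g(q_2)$, hence $g(q_1)t \ge g(q_2)t$ for every $t\ge 0$, and therefore $F_{n-3}(g(q_1)t)\ge F_{n-3}(g(q_2)t)$ pointwise. Multiplying by the nonnegative weight $f_1(t)$ and integrating preserves the inequality, giving $h(n,q_1,r)\ge h(n,q_2,r)$ straight from monotonicity of the integral, with no convergence theorem needed.

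Finally, for strictness I would observe that whenever $g(q_1)>0$ (equivalently $q_1^2+r^2<1$) every $t>0$ yields $g(q_1)t>0$, so either both arguments are positive and distinct or $g(q_2)t\le 0<g(q_1)t$; in both cases $F_{n-3}(g(q_1)t)>F_{n-3}(g(q_2)t)$, and since $f_1(t)>0$ on $(0,\infty)$ the strict pointwise gap forces $h(n,q_1,r)>h(n,q_2,r)$. In the complementary regime $g(q)\le 0$ both values equal $0$. The main obstacle I anticipate is purely bookkeeping: carrying out the algebraic collapse to $g(q)=(1-q^2-r^2)/(q^2 r^2)$ correctly, and then handling the degenerate case $q^2+r^2\ge 1$ so that the ``monotonically decreasing'' claim is read as non-increasing everywhere and strictly decreasing on the range where $h>0$.
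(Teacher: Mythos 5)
Your proof follows essentially the same route as the paper's: the same algebraic collapse of $(1-q^2-c^2)/c^2$ to $(1-q^2-r^2)/(q^2 r^2)$, the same observation that this quantity is decreasing in $q$ (decreasing first term plus constant term), and the same conclusion via the pointwise monotonicity of $F_{n-3}$ inside the integral weighted by the nonnegative $f_1$. Your extra care about strictness and the degenerate regime $q^2+r^2\ge 1$ (where $h\equiv 0$ and the claim must be read as non-increasing) is a minor refinement the paper omits, but the argument is otherwise identical.
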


Propositions \ref{prop:monotone} and \ref{prop:monotone2} imply that the probability of a false
correlation diminishes as $n$ increases or $q$ increases.

\subsection{Marginalizing over the sampling distribution of $\hat{\mathsf q}$}
\label{sec:sampling_dist}
Up to now we have glossed over the important detail that, when a scientific instrument
with \emph{average} accuracy (Pearson correlation) of  $q$ is used to obtain measurements for  $n$ subjects, the
correlation of the sampled $\bv$ with their ground-truth values need not be exactly $q$;
rather, the actual correlation $\hat{\mathsf q}$ is drawn from a sampling distribution $\mathrm{Pr}(\hat{\mathsf q}\ |\ q, n)$.
Particularly for small $n$, $\hat{\mathsf q}$ can deviate substantially from $q$.
Hence, to compute the probability of a false correlation, it is necessary to marginalize (via numeric integration)
over $\hat{\mathsf q}$:
\[ \int_{\hat{\mathsf q}} h(n,\hat{\mathsf q},r) \mathrm{Pr}(\hat{\mathsf q}\ |\ q, n) d\hat{\mathsf q} \]
The sampling distribution of $\hat{\mathsf q}$ can be estimated using the formula
derived by  \cite{soper1913probable}; see the appendix for more details.

\subsection{Case Studies}
\label{sec:case_studies}
\begin{figure*}
\begin{center}
\includegraphics[trim={.6cm .4cm .6cm .45cm},clip,width=3.25in]{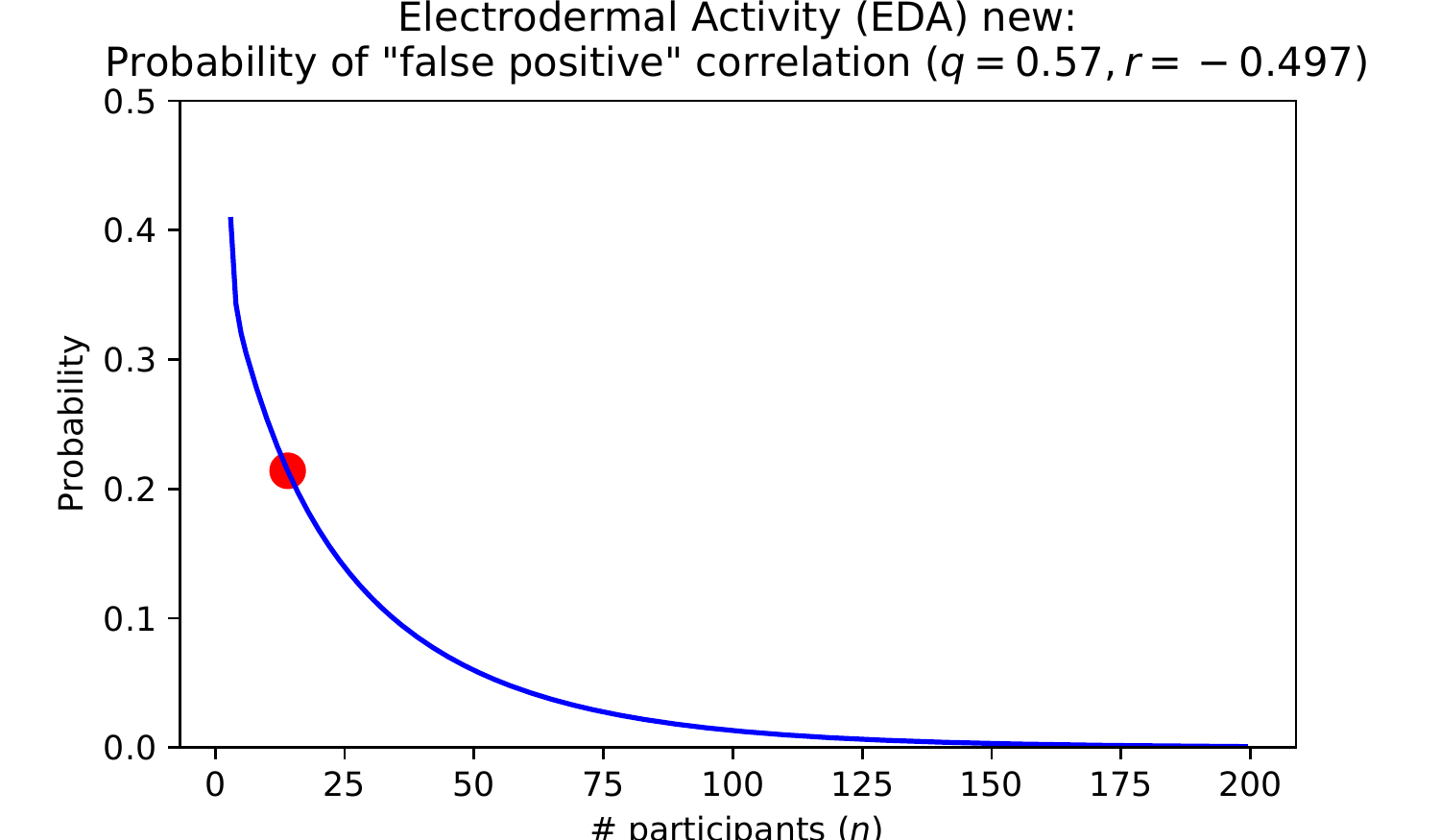}
\includegraphics[trim={.6cm .4cm .6cm .45cm},clip,width=3.25in]{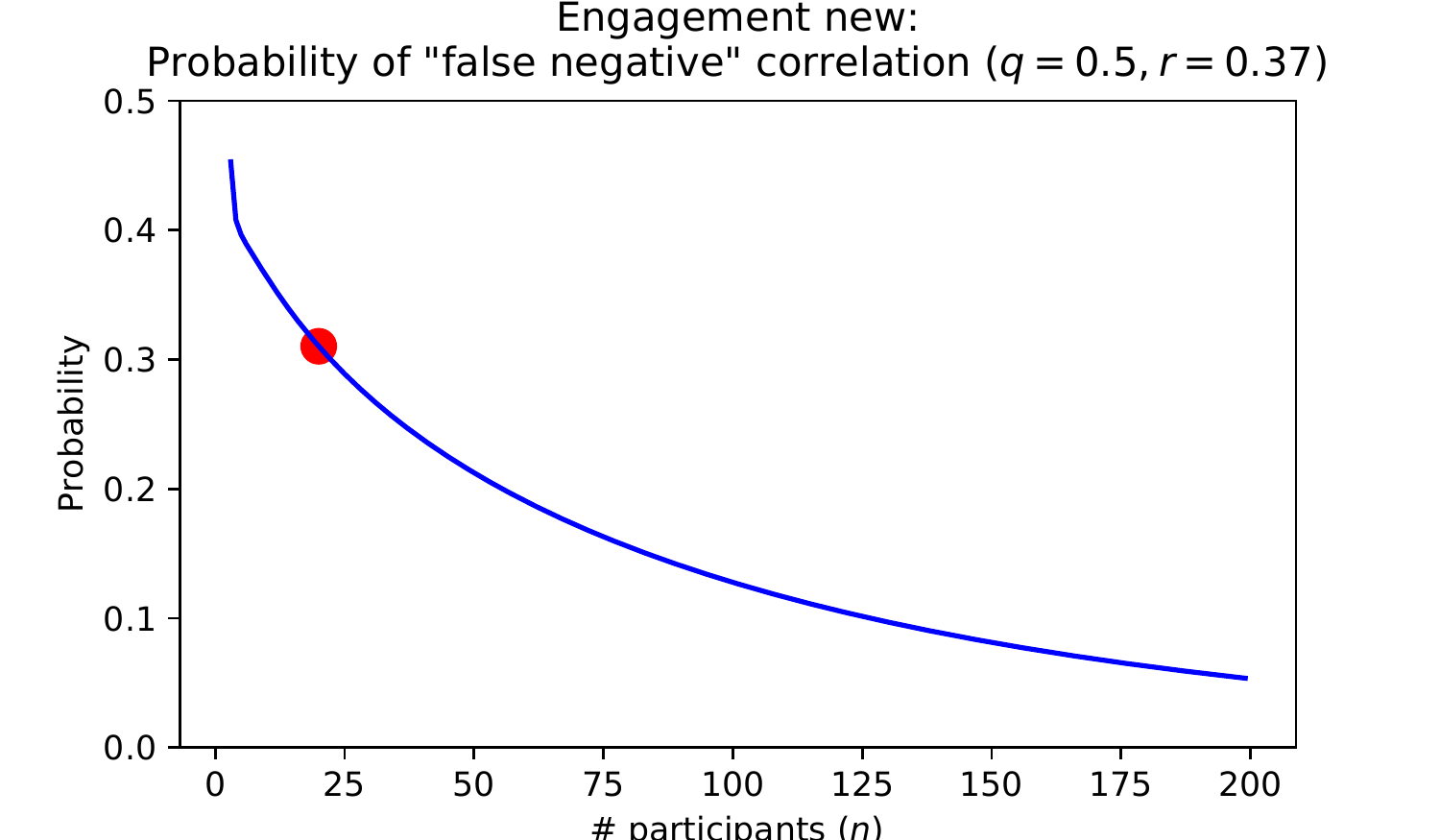}
\end{center}
\caption{Probability of false correlation, for  fixed $q$ and $r$, as a function of $n$.
The probability decreases as $n$ grows, but for small $n$ it can still be substantial. The red dots
indicate the $n$ from two recent behavioral studies
that used an automatic detector of affective state as a scientific instrument.  {\bf Left}:
Example inspired by a study on electrodermal activity  \cite{perugia2017electrodermal}. {\bf Right}: Example inspired by a study on student engagement  \cite{whitehill2014faces}. 
}
\label{fig:curves}
\end{figure*}

To put these theoretical results into perspective, we conducted
simulations based on two recent affective computing studies that
used automated detectors as scientific instruments. The first study ($n=14$),
by \cite{perugia2017electrodermal},
used an Empatica E4 wrist sensor to investigate how  electrodermal activity (EDA) ($U$)
is correlated with the subjects' emotions ($V$). The second study ($n=20$),
by  \cite{whitehill2014faces},
explored the relationship between student engagement ($U$), as measured by 
an engagement detector that analyzes static images of students' faces,
and test performance ($V$) in a cognitive skills training task.
In order to estimate the probability of a false correlation (in the sense
described above), we  need to know the accuracy of the automatic detector  -- i.e.,
the correlation $q$ between the automatic measurements and ground-truth of construct $U$ --
as well as the \emph{true} correlation $r$ between constructs $U$ and $V$.

{\bf Estimating $q$ and $r$}:
The value of $q$ can easily be estimated using cross-validation or other standard
procedures. For the first study (EDA), we use  the value $q=0.57$ 
reported in \cite{poh2010wearable} for cognitive tasks with a distill forearm sensor of EDA.
The value of $r$ is not knowable without access to ground-truth measurements; instead, we hypothesize
that the ground-truth correlation between $U$ and $V$ is exactly
what was estimated by the authors \cite{perugia2017electrodermal}
using the E4 sensor and emotion survey instruments: $r=-.497$.
For the second study (Engagement), we use the value $q=0.50$ reported in \cite{whitehill2014faces} that was obtained
using subject-independent cross-validation. For $r$, we use the correlation
obtained by the authors ($r=0.37$) when correlating test performance with
\emph{human}-labeled student engagement.

{\bf Results}: Plots of the probability of a false correlation (obtained 
from function $h$ derived above) as a function of the number of
participants $n$ are shown for each study (with their associated $q$ and $r$ values) 
in Figure  \ref{fig:curves}. The red dot in each graph
shows the actual number of participants from each experiment. Even for $n \geq 100$ subjects, the probability
is
non-trivial. For the values $n=14$ and $n=20$, these probabilities are substantial -- around $20\%$
for the EDA study and around $30\%$ for the Engagement study.
{\bf The possibility of a false correlation
is \emph{not} protected against by statistical significance testing} -- it is possible
for the estimated correlation between constructs $U$ and $V$ 
to be highly significant and yet have the wrong sign compared to the ground-truth correlation.
While this is almost always theoretically possible due to the inherent separation between
a construct and its measurement, the use in basic research of automatic detectors
that are trained to estimate another estimator can make this problem worse.

\section{Coverage of $\mathcal{T}^n$ when training a detector}
Given that \emph{which} vector $\bvr \in \mathcal{T}^n$ is obtained from 
an automatic detector $d$ can substantially impact the estimated correlation 
$\rho(\bvr, \bw)$ between constructs $U$ and $V$, it  could be useful to
\emph{average} the sample correlations  over \emph{many} vectors $\bvr^{(1)},\ldots,\bvr^{(m)}$ from
$\mathcal{T}^n$, i.e., to compute $\frac{1}{m} \sum_{i=1}^m \rho(\bvr^{(i)}, \bw)$.
This could help to reduce the variance of the estimator.
In this section we explore whether  it is feasible to generate many
different $\bvr^{(1)},\ldots,\bvr^{(m)}$, all with similar correlation $q$ with
ground-truth $\bu$, by training a set of automatic detectors $d^{(1)},\ldots,d^{(m)}$
using slightly different training configurations. In particular,
we varied: (1) the architecture (VGG-16 \cite{simonyan2014very} versus ResNet-50 \cite{he2016deep})
and (2) the random seed of weight initialization. Inspired by
recent work by Huang, et al. \cite{huang2017snapshot} on how 
an entire ensemble of detectors can be created during a  \emph{single} training run, we also 
varied (3) the number of training epochs, and saved snapshots of the trained detectors
at regular intervals.

During training, each detector's estimates $\bvr$ of the test labels evolves, and so
does the correlation between $\bvr$ and the ground-truth labels $\bu$.  However,
for the tasks we examined (described below),
the test correlations tend to stabilize over time, and  they converge to roughly
the same value even across different training runs and detection architectures.
Given a set of measurement vectors $\bvr^{(1)},\ldots,\bvr^{(m)}$ (produced by detectors $d^{(1)},\ldots,d^{(m)}$) whose accuracies (Pearson correlations with ground-truth)
are all approximately $q$, we can 
project them onto the $(n-3)$-sphere $\mathcal{T}^n$ of $0$-mean, unit-length vectors
whose correlation with $\bu$ is $q$. We can then visualize the 	``coverage'' of this
sphere by projecting it onto a 2-D plane, and also compare the coverage
to a random sample of $m$ elements of $\mathcal{T}^n$.

We explored the coverage of $\mathcal{T}^n$ for two automatic face analysis problems:
student engagement recognition  and
age estimation using the HBCU \cite{whitehill2014faces} (Engagement)
and GENKI \cite{GENKI} (Age) datasets, respectively; see Figure \ref{fig:datasets}
for labeled examples.
\begin{figure}
\begin{center}
\setlength{\tabcolsep}{2pt}
\begin{tabular}{cccccccc}
\multicolumn{8}{c}{Engagement (HBCU dataset \cite{whitehill2014faces})} \\
\includegraphics[width=.34in]{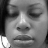} & 
\includegraphics[width=.34in]{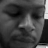} & 
\includegraphics[width=.34in]{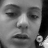} & 
\includegraphics[width=.34in]{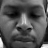} & 
\includegraphics[width=.34in]{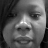} & 
\includegraphics[width=.34in]{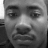} & 
\includegraphics[width=.34in]{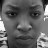} & 
\includegraphics[width=.34in]{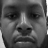} \\
1 & 1 & 2 & 2 & 3 & 3 & 4 & 4
\end{tabular}
\setlength{\tabcolsep}{2pt}
\begin{tabular}{cccccccc}
\multicolumn{8}{c}{Age (GENKI dataset \cite{GENKI})} \\
\includegraphics[width=.34in]{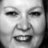} & 
\includegraphics[width=.34in]{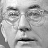} & 
\includegraphics[width=.34in]{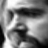} & 
\includegraphics[width=.34in]{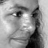} & 
\includegraphics[width=.34in]{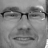} & 
\includegraphics[width=.34in]{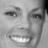} & 
\includegraphics[width=.34in]{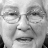} & 
\includegraphics[width=.34in]{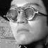} \\
55 & 60 & 30 & 38 & 31 & 30 & 70 & 30
\end{tabular}
\end{center}
\caption{Examples of face images used in the experiments in
Section \ref{sec:coverage_examples}. {\bf Top}: 
student engagement dataset \cite{whitehill2014faces}, in which engagement
is rated on a 1-4 scale.
{\bf Bottom}: images from the GENKI \cite{GENKI} dataset that were labeled with their perceived age (in years).}
\label{fig:datasets}
\end{figure}

\subsection{Detection architectures}
We examined two modern deep learning-based
visual recognition architectures -- VGG-16 \cite{simonyan2014very} and ResNet-50 \cite{he2016deep} -- to assess how much ``coverage'' of $\mathcal{T}^n$
each architecture can produce, as well as to compare
the variability of the resulting measurement vectors to each other.

\subsection{Training procedures}


{\bf Engagement detector}:
We performed $25$ training runs for each of the two network architectures (VGG-16, ResNet-50).
Training data consisted of $7629$ face images from 15 subjects of HBCU \cite{whitehill2014faces},
and testing data were $500$ images from the remaining 5 subjects.
(This corresponds to just one cross-validation fold from the original study \cite{whitehill2014faces}.)
Optimization was performed using SGD 
for $10000$ iterations, and the network weights were saved
every $1000$ iterations. In total, this produced $250$ detectors. 
The average correlation (over all $250$ detectors)
between the detectors'  test outputs $\bvr$ and ground-truth $\bu$ was $0.61$ (s.d.~$0.081$) for VGG-16 and $0.64$ (s.d.~$0.009$) for ResNet-50. 
Inspired by \cite{huang2017snapshot}, we also tried both cosine and
triangular \cite{smith2017cyclical} learning rates. However, in pilot testing we found that these
delivered worse accuracy than exponential learning rate decay and 
we abandoned the approach.

{\bf Age detector}:
We performed $25$ training runs for VGG-16 and ResNet-50 using SGD for $10000$ with snapshots every $1000$
iterations, as for engagement recognition. This produced $250$ detectors.
Training data consisted of $31040$ face images of the GENKI dataset \cite{GENKI},
and testing data consisted of $500$ face images.  The average
correlation of the automatic measurements with ground-truth on the test set was 
$0.595$ (s.d.~$0.036$) for VGG-16 and $0.60$ (s.d.~$0.014$) for ResNet-50.

\subsection{Visualizing elements of the $(n-3)$-sphere $\mathcal{T}^n$}
\label{sec:viz_disc}
Given a set of $m$ trained detectors, we can sample vectors of age/engagement estimates
$\bv^{(1)},\ldots,\bv^{(m)} \in \mathbb{R}^n$ whose correlation with
ground-truth $\bu$ is approximately $q$. Then we can visualize how these
vectors ``cover'' the $(n-3)$-sphere $\mathcal{T}^n$ using the following
procedure:
\begin{enumerate}
\item Normalize $\bu$, as well as each $\bv^{(j)}$, to have $0$-mean and unit-length.
\item Compute an orthonormal basis $\mathbf{B}$ (e.g., using a QR decomposition) so that (a) the first component of $\mathbf{B} \mathbf{x}$ is $0$
for every $\mathbf{x} \in \mathcal{H}$, and (b)
$\mathbf{B} \bu = (0,1,0,0,\ldots,0)$ (see Equation \ref{eqn:transformed_u}).
\item Project each $\bv^{(j)}$ onto the new basis $\mathbf{B}$.
By construction, the first component of each projection will be $0$ and the
second component will be $q=\rho(\bv^{(j)}, \bu)$.
\item Define each $\mathbf{x}^{(j)}$ to be the last $n-2$ components
of vector $\mathbf{B} \bv^{(j)}$.
\item Project the $\{ \mathbf{x}^{(j)} \}$ onto the two principal axes
obtained from principal component analysis (PCA).
\end{enumerate}
Since the 2-D projection of a 0-centered sphere onto any orthonormal projection  is a disc, 
the output of the procedure above is a set of points that lie on a disc of radius
$\sqrt{1-q^2}$.

\subsection{Generating random vectors on $\mathcal{T}^n$}
\label{sec:sampling}
In order to assess how evenly  the sphere $\mathcal{T}^n$ is ``covered''
by the vectors obtained from the automatic detectors, we can  generate
random vectors of $\mathcal{T}^n$ and likewise project them onto a 2-D disc.
We generate each such vector as follows:
\begin{enumerate}
\item Sample each $\mathbf{z}_i$ ($i=1,\ldots,n-2$) from a  standard normal distribution.
\item For $i=1,\ldots,n-2$, set $\mathbf{\hat u}_i = \frac{\sqrt{1-q^2} \times {\mathbf z}_i}{\sqrt{ \sum_{i'=1}^{n-2} {\mathbf z}_{i'}^2 }}$.
\end{enumerate}
We then project the vectors in the set  $\{ \mathbf{\hat u}^{(j)} \}$ onto the two principal axes
obtained from PCA. To enable a fair comparison between the variances of the randomly
generated elements of $\mathcal{T}^n$ and those obtained from the trained detectors,
we run PCA \emph{separately} for each set.

\subsection{Results}
\label{sec:coverage_examples}

\begin{figure*}
\begin{center}
\includegraphics[trim={.6cm .6cm .6cm .6cm},clip,width=3.25in]{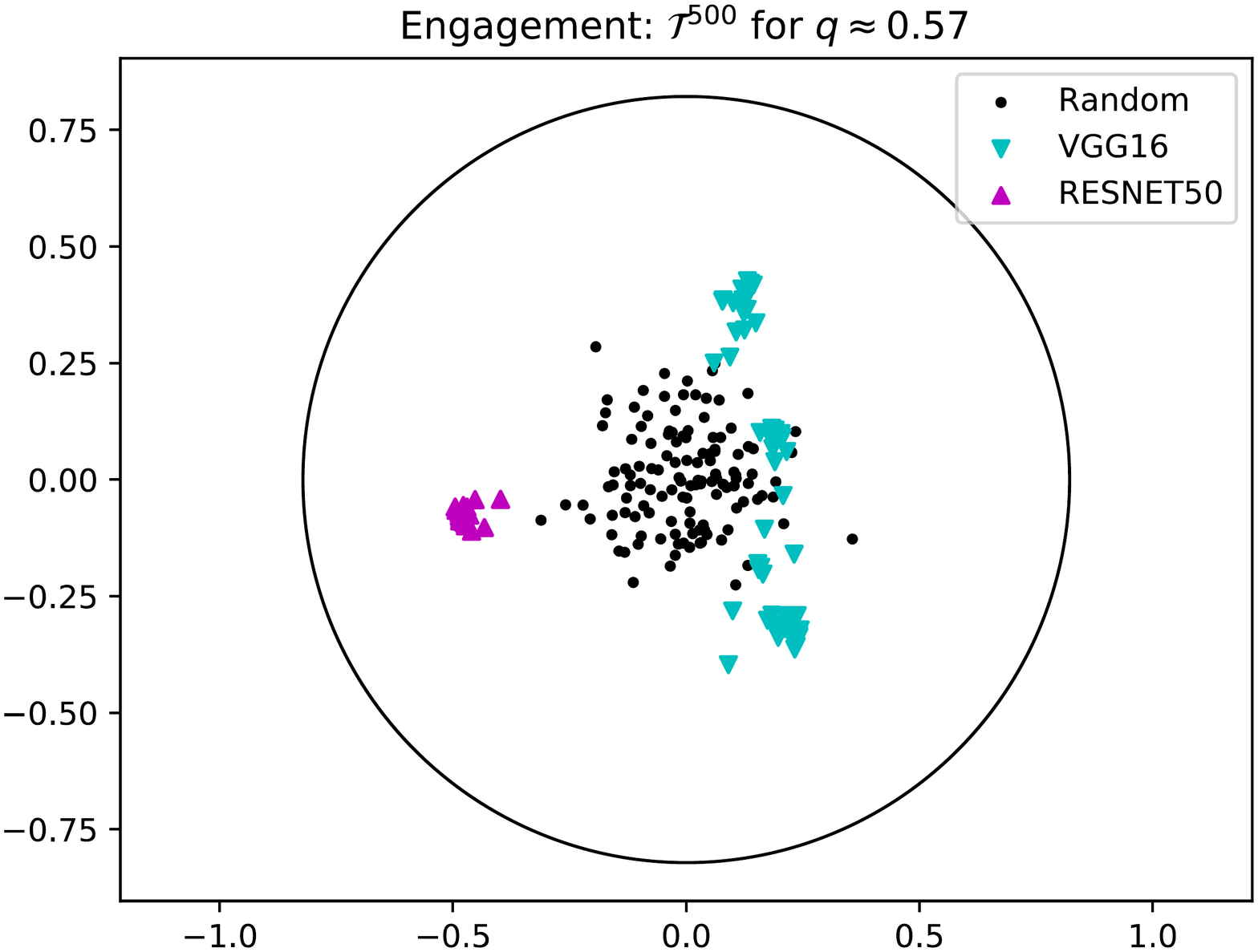}
\includegraphics[trim={.6cm .6cm .6cm .6cm},clip,width=3.25in]{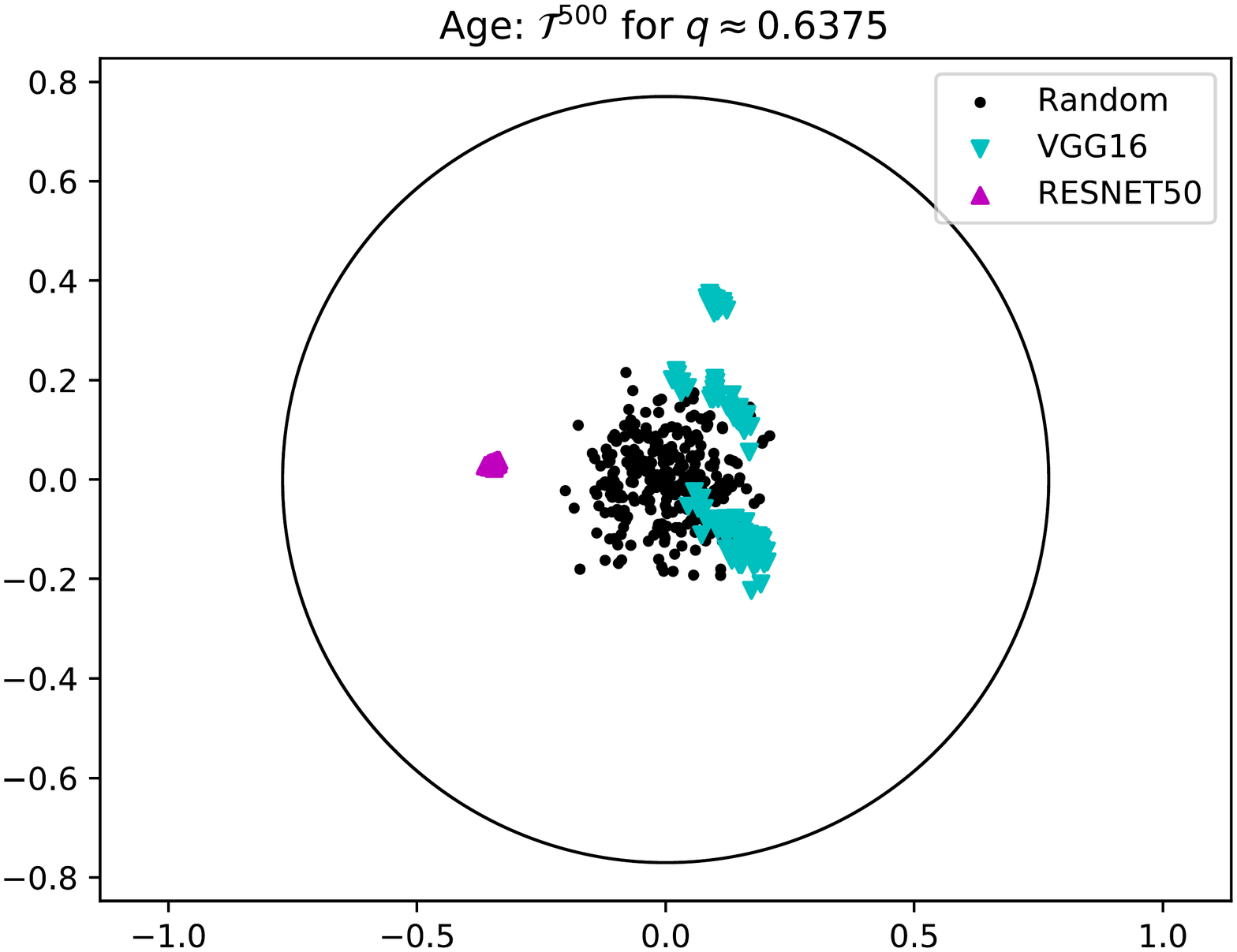}
\end{center}
\caption{Coverage of the $\mathcal{T}^n$ sphere (projected onto 
2 dimensions using PCA) from different neural networks trained to predict student engagement
({\bf left}) and age ({\bf right}) from face images.
We used either VGG-16 or ResNet-50 neural networks.
For comparison,
we also sampled random vectors using the procedure from Section \ref{sec:sampling}.
}
\label{fig:coverage}
\end{figure*}

We projected all vectors $\{ \bv^{(j)} \}$ whose correlation with $\bu$ was between
$0.575$ and $0.625$ for Engagement recognition and $0.625$ and $0.65$ and Age estimation; this amounted to $50\%$ of the engagement detectors and $45\%$ of the age detectors.
The projections  are shown for each task
in Figure \ref{fig:coverage}.
First, we observe that there is some ``spread'' -- the measurement vectors occupy different clusters
on the sphere. This indicates that the same training data can still yield automatic measurements $\bv$
on testing data whose correlations with each other is far less than 1. In fact, for engagement
recognition, the minimum
correlation, over all pairs $(\bv^{(j)}, \bv^{(j')})$, was $0.64$.

For engagement recognition, the VGG-16 based measurements and the ResNet-50 based  measurements
each resided within their own clusters on the sphere, and these clusters did not overlap. This suggests
that, even though both architectures yielded similar overall accuracies,  they
are making different kinds of estimation errors on the test set. Interestingly for both age estimation and engagement recognition we can see that VGG-16 has a bigger ``spread'' compared to ResNet-50. We speculate this might be due to VGG-16 ($138$ million) having significantly more  parameters compared to ResNet-50 ($25$ million), thus enabling it to learn more varied features.



Finally, a comparison of the variance between the automatic measurements $\bv^{(1)},\ldots,\bv^{(m)}$
and random samples from $\mathcal{T}^n$ indicates that varying the training configuration (architecture,
hyperparameters) provides only limited ability to cover the sphere: the variance in the vectors,
as quantified as the sum of the trace
of their covariance matrix, was statistically significantly
less compared to randomly sampled points on $\mathcal{T}^n$ ($p<0.01$, 1-tailed, Monte Carlo simulation).



\section{Conclusions}
\label{sec:conclusions}
Advances in machine perception present a powerful opportunity
to create new  scientific instruments that can benefit  basic research in sociobehavioral
sciences.  However, since detectors are often
trained to estimate existing measures, which are already only an estimate of underlying constructs,
then these instruments are essentially one step further removed from ground-truth. For this reason,
it  is important to interpret results obtained with them with care.

In this paper, we investigated
how measurements of construct $U$ obtained with an automatic detector
can impact the estimated correlation between $U$ and another construct $V$. We showed that:
(1) The set of $0$-mean unit-length $n$-vectors with a fixed Pearson product-moment correlation  $q$ to
vector $\bu$ is a $(n-3)$-sphere $\mathcal{T}^n$ embedded in $\mathbb{R}^n$.
(2) If the correlation between automatic measurements $\bvr$ and the ground-truth measurements is $q$;
if the true correlation between $U$ and $V$ is $r$; and if $\bvr$ is sampled uniformly from $\mathcal{T}^n$;
then the expected sample correlation obtained with the automatic detector is $qr$.
(3) The probability of a ``false correlation'', i.e., a sample correlation
between constructs $U$ and $V$ whose sign differs from the true correlation, is monotonically decreasing
in $n$ (number of participants) and also monotonically decreasing in $q$ (accuracy of the detector). 
These probabilities can be non-trivial for small values of $n$ that are nonetheless sometimes found in contemporary research
using automatic facial expression and affect detectors. Moreover, the danger of a false correlation
is not eliminated through statistical significance testing.
(4)  We explored empirically how efficiently multiple
neural network-based detectors of age and student engagement, when trained using different architectures
and hyperparameters but the same training data, can ``cover'' the sphere $\mathcal{T}^n$.

In practice, our results suggest that, particularly when the number of participants is small and/or
the accuracy of the detector is modest, it is important to consider the possibility of a false correlation,
or at least a skewed correlation (by factor $q$), when drawing scientific conclusions.

{\bf Limitation and future work}: In our study we assumed that
$\bvr$ is a random sample from the uniform distribution over
$\mathcal{T}^n$ -- this expresses the idea that \emph{a priori} we may have no idea which \emph{particular}
element of $\mathcal{T}^n$ detector $d$ will return. In reality, however, detectors have biases -- e.g.,
due to head pose, lighting conditions, training set composition, etc. -- and these can affect
which element of $\mathcal{T}^n$ is obtained.

{\bf Acknowledgements}:
This research was supported by a Cyberlearning grant from the National Science Foundation (grant no.~\#1822768).


\section{Appendix}
\subsection{Proof of Proposition 2}
We prove the proposition for the case that $r <0$; the case for $r > 0$ is similar.

From Section III, we have that
\[ \rho(\bvr,\bw) = qr + \mathsf{\hat u}_3 \sqrt{1-r^2} \]
%
Since each $\hat{\mathsf u}_i$ ($i=3,4,\ldots,n$) is a coordinate on an $(n-3)$-sphere,
it can be re-parameterized  \cite{muller1959note} by sampling $n-2$ standard normal random variables and normalizing, i.e.:
\[
\hat{\mathsf u}_i = \frac{\sqrt{1-q^2} \times {\mathsf z}_i}{\sqrt{ \sum_{j=3}^n {\mathsf z}_j^2 }}
\]
where each ${\mathsf z}_i \sim \mathbb{N}(0,1)$. 
A false positive correlation thus occurs  when 
$\hat{u}_3$ is at least $c = |qr|/\sqrt{1-r^2}$ more than its expected value $qr$:
\[
\textrm{Pr}[\hat{\mathsf u}_3 \geq c] = \textrm{Pr}\left[\frac{\sqrt{1-q^2} \times {\mathsf z}_3}{\sqrt{ \sum_{j=3}^n {\mathsf z}_j^2 }} \geq c\right]
\]
Due to the inequality, we must handle the cases that ${\mathsf z}_3 \geq 0$ and ${\mathsf z}_3 < 0$ separately.
Note that the latter case contributes 0 probability since $c\geq 0$ and $q>0$.
Also, since ${\mathsf z}_3$ is a standard normal random variable, $\textrm{Pr}[{\mathsf z}_3\geq 0]=0.5$.
\begin{eqnarray*}
\lefteqn{\textrm{Pr}[\hat{u}_3 \geq c]} && \\
 &=& \textrm{Pr}\left[\frac{\sqrt{1-q^2} \times {\mathsf z}_3}{\sqrt{ \sum_{j=3}^n {\mathsf z}_j^2 }} \geq c\ \Big|\ {\mathsf z}_3 \geq 0\right] \textrm{Pr}[{\mathsf z}_3\geq 0] + \\
 && \textrm{Pr}\left[\frac{\sqrt{1-q^2} \times {\mathsf z}_3}{\sqrt{ \sum_{j=3}^n {\mathsf z}_j^2 }} \geq c\ \Big|\ {\mathsf z}_3 < 0\right] \textrm{Pr}[{\mathsf z}_3< 0] + \\
 &=& \frac{1}{2} \textrm{Pr}\left[\frac{\sqrt{1-q^2} \times {\mathsf z}_3}{\sqrt{ \sum_{j=3}^n {\mathsf z}_j^2 }} \geq c\ \Big|\ {\mathsf z}_3 \geq 0\right] + 0 \\
 &=& \frac{1}{2} \textrm{Pr}\left[(1-q^2){\mathsf z}_3^2 \geq c^2 \sum_{j=3}^n {\mathsf z}_j^2\right] \\
 &=& \frac{1}{2} \textrm{Pr}\left[(1-q^2-c^2){\mathsf z}_3^2 \geq c^2 \sum_{j=4}^n {\mathsf z}_j^2\right] \\
 &=& \frac{1}{2} \textrm{Pr}\left[{\mathsf z}_3^2 \geq \frac{c^2}{(1-q^2-c^2)} \sum_{j=4}^n {\mathsf z}_j^2\right]
\end{eqnarray*}
For $n>3$,
each side of the inequality is a sum of squared normally distributed random variables, i.e., a $\chi^2$-random
variable (though with different degrees of freedom).  We can thus rewrite this probability as
\begin{eqnarray*}
\textrm{Pr}[\hat{u}_3 \geq c] &=&\frac{1}{2} \textrm{Pr}\left[\chi^2_1 \geq \left(\frac{c^2}{1 - q^2 - c^2}\right) \chi^2_{(n-3)} \right] \\
                            &=& \frac{1}{2} \int_0^\infty f_1(t) F_{n-3}\left(\frac{1 - q^2 - c^2}{c^2} t\right) dt \\
                            &\doteq& h(n,q,r)
\end{eqnarray*}
where $\chi^2_1$ and $\chi^2_{(n-3)}$ are $\chi^2$ random variables with $1$ and $(n-3)$ degrees
of freedom, respectively.
The probability is equivalent to the integral because, for any value $t$ of
the $\chi_1^2$ variable, we require that the $\chi_{n-3}^2$ variable be less than
$t$ (after applying a scaling factor).  To our knowledge, there is no closed formula for this integral,
but we can compute it numerically.  For $n=3$, we have 
\begin{eqnarray*}
\textrm{Pr}[\hat{u}_3 \geq c] &=& \frac{1}{2} \textrm{Pr}\left[(1-q^2 - c^2) {\mathsf z}_3^2 \geq 0\right] \\
                           &=& \frac{1}{2} \textrm{Pr}[c^2\leq 1-q^2]
\end{eqnarray*}
since a $\chi^2$-random variable is  non-negative, and
where the probability of $c^2\leq 1-q^2$ is $1$ if the inequality is true and $0$ otherwise.


\subsection{Proof of Proposition 3}
For convenience, define $\alpha = \frac{1 - q^2 - c^2}{c^2}$.
\begin{eqnarray*}
\lefteqn{h(n+1,q,r) - h(n,q,r)} && \\
&=&
\int_0^\infty \left[f_1(t) F_{(n+1)-3}\left(\alpha t\right) -
f_1(t) F_{n-3}\left(\alpha t\right)\right] dt \\
&=& 
\int_0^\infty f_1(t) \left[F_{n-2}\left(\alpha t\right) -
                           F_{n-3}\left(\alpha t\right)
                           \right] dt \label{eqn:monotone}
\end{eqnarray*}
Ghosh \cite{ghosh1973some} proved that, for any fixed $t>0$, $\textrm{Pr}[\chi_k^2 > t]$ is monotonically increasing in
the degrees of freedom $k$; hence, $F_k(t)$ is monotonically decreasing in $k$. 
Therefore,
$F_{n-2}\left(\alpha t\right) - F_{n-3}\left(\alpha t\right) < 0$
for all $t$.  Since $f_k$ is a non-negative
function for all $k$, then the integral in Equation \ref{eqn:monotone} must be negative; hence,
$h$ is monotonically decreasing in $n$ for every $c>0$ and $q\in(0,1]$.

\subsection{Proof of Proposition 4}

First, we show that $\alpha$ is monotonically decreasing in $q^2$:
\begin{eqnarray*}
\alpha(q) &=& \frac{1 - q^2 - c^2}{c^2} \\
       &=& \frac{1 - q^2 - q^2 r^2/(1-r^2)}{q^2 r^2/(1-r^2)} \\
       &=& \frac{(1-r^2)(1-q^2) - q^2 r^2}{q^2 r^2} \\
       &=& \frac{1 - r^2 - q^2}{q^2 r^2} \\
       &=& \frac{1 - r^2}{q^2 r^2} - \frac{1}{r^2}
\end{eqnarray*}
The first term is monotonically decreasing in $q^2$, and the second term is constant in $q^2$.

Next, let $\epsilon$ be a positive real number such that $q+\epsilon \leq 1$:
\begin{eqnarray*}
\lefteqn{h(n,q+\epsilon,r) - h(n,q,r)} && \\
&=&
\int_0^\infty f_1(t) F_{n-3}\left(\alpha(q+\epsilon) t\right) dt - \\
&& \int_0^\infty f_1(t) F_{n-3}\left(\alpha(q) t\right) dt \\
&=& 
\int_0^\infty f_1(t) \left[F_{n-3}\left(\alpha(q+\epsilon) t\right) -
                           F_{n-3}\left(\alpha(q) t\right)
                           \right] dt
\end{eqnarray*}
Since $F_{n-3}$ is monotonically \emph{increasing}, then the expression in brackets is negative.
Since $f_1$ is non-negative, then the entire integral must be less than 0.

\section{Sampling distribution $\textrm{Pr}(\hat{\mathsf q}\ |\ q, n)$}
The sampling distribution can  be computed exactly \cite{fisher1915frequency}, but this is computationally feasible only for small $n$. Hence, we use the approximation from Soper \cite{soper1913probable}: Let $q$ denote the population Pearson correlation coefficient, and let $\hat{\mathsf q}$ denote the sample correlation from $n$ data. Then

\begin{eqnarray*}
\textrm{Pr}(\hat{\mathsf q}\ |\ q,n) &\propto&
(1-\hat{\mathsf q})^{m_1}  (1+\hat{\mathsf q})^{m_2}\\
m_1&=& \frac{1}{2} (\lambda-1)(1-\mu_q)-1\\
m_2&=& \frac{1}{2} (\lambda-1)(1+\mu_q)-1\\
\lambda&=& (1-\mu_q^2)/\sigma_q^2\\
\sigma_q&=& \frac{(1-q^2)}{\sqrt{n}}  \left(1+\frac{(1+5.5q^2)}{2n}\right)\\
\mu_q&=& \sqrt{q^2 - \frac{c}{n}-\frac{c(1+5q^2)}{2n^2}}\\
c&=& q^2 (1-q^2)
\end{eqnarray*}

\bibliography{paper}

\begin{thebibliography}{26}
\providecommand{\natexlab}[1]{#1}
\providecommand{\url}[1]{\texttt{#1}}
\expandafter\ifx\csname urlstyle\endcsname\relax
  \providecommand{\doi}[1]{doi: #1}\else
  \providecommand{\doi}{doi: \begingroup \urlstyle{rm}\Url}\fi

\bibitem[Barocas et~al.(2017)Barocas, Hardt, and
  Narayanan]{barocas2017fairness}
Barocas, S., Hardt, M., and Narayanan, A.
\newblock Fairness in machine learning.
\newblock In \emph{Conference on Neural Information Processing Systems, Long
  Beach, CA}, 2017.

\bibitem[Carlson \& Herdman(2012)Carlson and Herdman]{carlson2012understanding}
Carlson, K.~D. and Herdman, A.~O.
\newblock Understanding the impact of convergent validity on research results.
\newblock \emph{Organizational Research Methods}, 15\penalty0 (1):\penalty0
  17--32, 2012.

\bibitem[Chen et~al.(2014)Chen, Yoon, Leong, Martin, and Ma]{chen2014initial}
Chen, L., Yoon, S.-Y., Leong, C.~W., Martin, M., and Ma, M.
\newblock An initial analysis of structured video interviews by using
  multimodal emotion detection.
\newblock In \emph{Proceedings of the 2014 workshop on Emotion Representation
  and Modelling in Human-Computer-Interaction-Systems}, pp.\  1--6. ACM, 2014.

\bibitem[Collins \& Read(1990)Collins and Read]{collins1990adult}
Collins, N.~L. and Read, S.~J.
\newblock Adult attachment, working models, and relationship quality in dating
  couples.
\newblock \emph{Journal of personality and social psychology}, 58\penalty0
  (4):\penalty0 644, 1990.

\bibitem[Duckworth et~al.(2007)Duckworth, Peterson, Matthews, and
  Kelly]{duckworth2007grit}
Duckworth, A.~L., Peterson, C., Matthews, M.~D., and Kelly, D.~R.
\newblock Grit: perseverance and passion for long-term goals.
\newblock \emph{Journal of personality and social psychology}, 92\penalty0
  (6):\penalty0 1087, 2007.

\bibitem[Fisher(1915)]{fisher1915frequency}
Fisher, R.~A.
\newblock Frequency distribution of the values of the correlation coefficient
  in samples from an indefinitely large population.
\newblock \emph{Biometrika}, 10\penalty0 (4):\penalty0 507--521, 1915.

\bibitem[Ghosh(1973)]{ghosh1973some}
Ghosh, B.
\newblock Some monotonicity theorems for $\chi^2$, ${F}$ and $t$ distributions
  with applications.
\newblock \emph{Journal of the Royal Statistical Society. Series B
  (Methodological)}, pp.\  480--492, 1973.

\bibitem[Glass \& Collins(1970)Glass and Collins]{glass1970geometric}
Glass, G.~V. and Collins, J.~R.
\newblock Geometric proof of the restriction on the possible values of rxy when
  r xz and ryz are fixed.
\newblock \emph{Educational and Psychological Measurement}, 30\penalty0
  (1):\penalty0 37--39, 1970.

\bibitem[He et~al.(2016)He, Zhang, Ren, and Sun]{he2016deep}
He, K., Zhang, X., Ren, S., and Sun, J.
\newblock Deep residual learning for image recognition.
\newblock In \emph{Proceedings of the IEEE conference on computer vision and
  pattern recognition}, pp.\  770--778, 2016.

\bibitem[Huang et~al.(2017)Huang, Li, Pleiss, Liu, Hopcroft, and
  Weinberger]{huang2017snapshot}
Huang, G., Li, Y., Pleiss, G., Liu, Z., Hopcroft, J.~E., and Weinberger, K.~Q.
\newblock Snapshot ensembles: Train 1, get m for free.
\newblock \emph{arXiv preprint arXiv:1704.00109}, 2017.

\bibitem[Kearns et~al.(2018)Kearns, Neel, Roth, and Wu]{kearns2018empirical}
Kearns, M., Neel, S., Roth, A., and Wu, Z.~S.
\newblock An empirical study of rich subgroup fairness for machine learning.
\newblock \emph{arXiv preprint arXiv:1808.08166}, 2018.

\bibitem[Lab()]{GENKI}
Lab, M.~P.
\newblock {The MPLab GENKI Database}.
\newblock \url{http://mplab.ucsd.edu}.

\bibitem[Leung \& Lam(1975)Leung and Lam]{leung1975note}
Leung, C.-K. and Lam, K.
\newblock A note on the geometric representation of the correlation
  coefficients.
\newblock \emph{The American Statistician}, 29\penalty0 (3):\penalty0 128--130,
  1975.

\bibitem[Matthews et~al.(1999)Matthews, Joyner, Gilliland, Campbell, Falconer,
  and Huggins]{matthews1999validation}
Matthews, G., Joyner, L., Gilliland, K., Campbell, S., Falconer, S., and
  Huggins, J.
\newblock Validation of a comprehensive stress state questionnaire: Towards a
  state big three.
\newblock \emph{Personality psychology in Europe}, 7:\penalty0 335--350, 1999.

\bibitem[Monkaresi et~al.(2017)Monkaresi, Bosch, Calvo, and
  D'Mello]{monkaresi2017automated}
Monkaresi, H., Bosch, N., Calvo, R.~A., and D'Mello, S.~K.
\newblock Automated detection of engagement using video-based estimation of
  facial expressions and heart rate.
\newblock \emph{IEEE Transactions on Affective Computing}, 8\penalty0
  (1):\penalty0 15--28, 2017.

\bibitem[Muller(1959)]{muller1959note}
Muller, M.~E.
\newblock A note on a method for generating points uniformly on n-dimensional
  spheres.
\newblock \emph{Communications of the ACM}, 2\penalty0 (4):\penalty0 19--20,
  1959.

\bibitem[Olkin(1981)]{olkin1981range}
Olkin, I.
\newblock Range restrictions for product-moment correlation matrices.
\newblock \emph{Psychometrika}, 46\penalty0 (4):\penalty0 469--472, 1981.

\bibitem[Parra et~al.(2017)Parra, Miljkovitch, Persiaux, Morales, and
  Scherer]{parra2017multimodal}
Parra, F., Miljkovitch, R., Persiaux, G., Morales, M., and Scherer, S.
\newblock The multimodal assessment of adult attachment security: developing
  the biometric attachment test.
\newblock \emph{Journal of medical Internet research}, 19\penalty0 (4), 2017.

\bibitem[Perugia et~al.(2017)Perugia, Rodr{\'\i}guez-Mart{\'\i}n, Boladeras,
  Mallofr{\'e}, Barakova, and Rauterberg]{perugia2017electrodermal}
Perugia, G., Rodr{\'\i}guez-Mart{\'\i}n, D., Boladeras, M.~D., Mallofr{\'e},
  A.~C., Barakova, E., and Rauterberg, M.
\newblock Electrodermal activity: explorations in the psychophysiology of
  engagement with social robots in dementia.
\newblock In \emph{Robot and Human Interactive Communication (RO-MAN), IEEE
  International Symposium on}, pp.\  1248--1254, 2017.

\bibitem[Picard(2010)]{picard2010affective}
Picard, R.~W.
\newblock Affective computing: from laughter to ieee.
\newblock \emph{IEEE Transactions on Affective Computing}, 1\penalty0
  (1):\penalty0 11--17, 2010.

\bibitem[Poh et~al.(2010)Poh, Swenson, and Picard]{poh2010wearable}
Poh, M.-Z., Swenson, N.~C., and Picard, R.~W.
\newblock A wearable sensor for unobtrusive, long-term assessment of
  electrodermal activity.
\newblock \emph{IEEE transactions on Biomedical engineering}, 57\penalty0
  (5):\penalty0 1243--1252, 2010.

\bibitem[Priest(1968)]{priest1968range}
Priest, H.~F.
\newblock Range of correlation coefficients.
\newblock \emph{Psychological reports}, 22\penalty0 (1):\penalty0 168--170,
  1968.

\bibitem[Simonyan \& Zisserman(2014)Simonyan and Zisserman]{simonyan2014very}
Simonyan, K. and Zisserman, A.
\newblock Very deep convolutional networks for large-scale image recognition.
\newblock \emph{arXiv preprint arXiv:1409.1556}, 2014.

\bibitem[Smith(2017)]{smith2017cyclical}
Smith, L.~N.
\newblock Cyclical learning rates for training neural networks.
\newblock In \emph{Applications of Computer Vision (WACV), 2017 IEEE Winter
  Conference on}, pp.\  464--472, 2017.

\bibitem[Soper(1913)]{soper1913probable}
Soper, H.
\newblock On the probable error of the correlation coefficient to a second
  approximation.
\newblock \emph{Biometrika}, 9\penalty0 (1/2):\penalty0 91--115, 1913.

\bibitem[Whitehill et~al.(2014)Whitehill, Serpell, Lin, Foster, and
  Movellan]{whitehill2014faces}
Whitehill, J., Serpell, Z., Lin, Y.-C., Foster, A., and Movellan, J.~R.
\newblock The faces of engagement: Automatic recognition of student
  engagementfrom facial expressions.
\newblock \emph{IEEE Transactions on Affective Computing}, 5\penalty0
  (1):\penalty0 86--98, 2014.

\end{thebibliography}


\begin{thebibliography}{10}\itemsep=-1pt

\bibitem{carlson2012understanding}
K.~D. Carlson and A.~O. Herdman.
\newblock Understanding the impact of convergent validity on research results.
\newblock {\em Organizational Research Methods}, 15(1):17--32, 2012.

\bibitem{chen2014initial}
L.~Chen, S.-Y. Yoon, C.~W. Leong, M.~Martin, and M.~Ma.
\newblock An initial analysis of structured video interviews by using
  multimodal emotion detection.
\newblock In {\em Proceedings of the 2014 workshop on Emotion Representation
  and Modelling in Human-Computer-Interaction-Systems}, pages 1--6. ACM, 2014.

\bibitem{collins1990adult}
N.~L. Collins and S.~J. Read.
\newblock Adult attachment, working models, and relationship quality in dating
  couples.
\newblock {\em Journal of personality and social psychology}, 58(4):644, 1990.

\bibitem{duckworth2007grit}
A.~L. Duckworth, C.~Peterson, M.~D. Matthews, and D.~R. Kelly.
\newblock Grit: perseverance and passion for long-term goals.
\newblock {\em Journal of personality and social psychology}, 92(6):1087, 2007.

\bibitem{ghosh1973some}
B.~Ghosh.
\newblock Some monotonicity theorems for $\chi^2$, ${F}$ and $t$ distributions
  with applications.
\newblock {\em Journal of the Royal Statistical Society. Series B
  (Methodological)}, pages 480--492, 1973.

\bibitem{glass1970geometric}
G.~V. Glass and J.~R. Collins.
\newblock Geometric proof of the restriction on the possible values of rxy when
  r xz and ryz are fixed.
\newblock {\em Educational and Psychological Measurement}, 30(1):37--39, 1970.

\bibitem{huang2017snapshot}
G.~Huang, Y.~Li, G.~Pleiss, Z.~Liu, J.~E. Hopcroft, and K.~Q. Weinberger.
\newblock Snapshot ensembles: Train 1, get m for free.
\newblock {\em arXiv preprint arXiv:1704.00109}, 2017.

\bibitem{GENKI}
M.~P. Lab.
\newblock {The MPLab GENKI Database}.
\newblock \url{http://mplab.ucsd.edu}.

\bibitem{leung1975note}
C.-K. Leung and K.~Lam.
\newblock A note on the geometric representation of the correlation
  coefficients.
\newblock {\em The American Statistician}, 29(3):128--130, 1975.

\bibitem{matthews1999validation}
G.~Matthews, L.~Joyner, K.~Gilliland, S.~Campbell, S.~Falconer, and J.~Huggins.
\newblock Validation of a comprehensive stress state questionnaire: Towards a
  state big three.
\newblock {\em Personality psychology in Europe}, 7:335--350, 1999.

\bibitem{monkaresi2017automated}
H.~Monkaresi, N.~Bosch, R.~A. Calvo, and S.~K. D'Mello.
\newblock Automated detection of engagement using video-based estimation of
  facial expressions and heart rate.
\newblock {\em IEEE Transactions on Affective Computing}, 8(1):15--28, 2017.

\bibitem{muller1959note}
M.~E. Muller.
\newblock A note on a method for generating points uniformly on n-dimensional
  spheres.
\newblock {\em Communications of the ACM}, 2(4):19--20, 1959.

\bibitem{olkin1981range}
I.~Olkin.
\newblock Range restrictions for product-moment correlation matrices.
\newblock {\em Psychometrika}, 46(4):469--472, 1981.

\bibitem{parra2017multimodal}
F.~Parra, R.~Miljkovitch, G.~Persiaux, M.~Morales, and S.~Scherer.
\newblock The multimodal assessment of adult attachment security: developing
  the biometric attachment test.
\newblock {\em Journal of medical Internet research}, 19(4), 2017.

\bibitem{perugia2017electrodermal}
G.~Perugia, D.~Rodr{\'\i}guez-Mart{\'\i}n, M.~D. Boladeras, A.~C. Mallofr{\'e},
  E.~Barakova, and M.~Rauterberg.
\newblock Electrodermal activity: explorations in the psychophysiology of
  engagement with social robots in dementia.
\newblock In {\em Robot and Human Interactive Communication (RO-MAN), 2017 26th
  IEEE International Symposium on}, pages 1248--1254. IEEE, 2017.

\bibitem{poh2010wearable}
M.-Z. Poh, N.~C. Swenson, and R.~W. Picard.
\newblock A wearable sensor for unobtrusive, long-term assessment of
  electrodermal activity.
\newblock {\em IEEE transactions on Biomedical engineering}, 57(5):1243--1252,
  2010.

\bibitem{priest1968range}
H.~F. Priest.
\newblock Range of correlation coefficients.
\newblock {\em Psychological reports}, 22(1):168--170, 1968.

\bibitem{smith2017cyclical}
L.~N. Smith.
\newblock Cyclical learning rates for training neural networks.
\newblock In {\em Applications of Computer Vision (WACV), 2017 IEEE Winter
  Conference on}, pages 464--472. IEEE, 2017.

\bibitem{whitehill2014faces}
J.~Whitehill, Z.~Serpell, Y.-C. Lin, A.~Foster, and J.~R. Movellan.
\newblock The faces of engagement: Automatic recognition of student
  engagementfrom facial expressions.
\newblock {\em IEEE Transactions on Affective Computing}, 5(1):86--98, 2014.

\end{thebibliography}
\bibliographystyle{icml2019}

\end{document}